\newcommand\com{%
    \tikz[radius=0.2em] {%
        \fill (0,0) -- ++(0.2em,0) arc [start angle=0,end angle=90] -- ++(0,-0.4em) arc [start angle=270, end angle=180];%
        \draw (0,0) circle;%
    }%
}
\newcommand{\gibbserr}{\eps_{\mathrm{Gibbs}}}
\newcommand{\bayeserr}{\eps_{\mathrm{Bayes}}}
\newcommand{\bpmerr}{\eps_{\mathrm{BPM}}}
\newcommand{\kl}{\mathrm{KL}}
\newcommand{\econst}{\mathrm{e}}
\newcommand{\eps}{\varepsilon}
\renewcommand{\phi}{\varphi}
\newcommand{\half}{\tfrac{1}{2}}
\newcommand{\Q}{\mathbb{Q}}
\newcommand{\R}{\mathbb{R}}
\newcommand{\sign}{\operatorname{sign}}
\newcommand{\trace}{\operatorname{tr}}
\newcommand{\abs}[1]{\vert {#1} \vert}
\newcommand{\diff}{\mathrm{d}}
\newcommand{\idiff}{\,\diff}
\newcommand{\Expect}{\operatorname{\mathbb{E}}}
\newcommand{\Probe}{\mathbb{P}}
    \theoremstyle{definition}
      \theoremstyle{plain}
      \newtheorem{theorem}{Theorem}
      \newtheorem{lemma}{Lemma}
\newcommand*\circled[1]{%
  \tikz[baseline=(C.base)]\node[draw,circle,inner sep=1.2pt,line width=0.2mm,](C) {#1};}
\newcommand*\Myitem{%
  \stepcounter{enumi}\item[\circled{\theenumi}]}
\begin{document}
\twocolumn[\vspace{-15pt}
\icmltitle{Kernel Interpolation as a Bayes Point Machine}

\hspace{0.05\textwidth}
\begin{minipage}{0.3\textwidth}\begin{center}
    \textbf{Jeremy Bernstein}\\
    {\small\texttt{bernstein@caltech.edu}}
\end{center}\end{minipage}
\begin{minipage}{0.3\textwidth}\begin{center}
    \textbf{Alex Farhang}\\
    {\small\texttt{afarhang@caltech.edu}}
\end{center}\end{minipage}
\begin{minipage}{0.3\textwidth}\begin{center}
    \textbf{Yisong Yue}\\
    {\small\texttt{yyue@caltech.edu}}
\end{center}\end{minipage}

\icmlkeywords{neural networks, kernels, Gaussian processes, PAC-Bayes, convex geometry, Bayes point machines, Gibbs error, Bayes error, mean voter theorem}

\vskip 0.3in
]

\begin{abstract}
    A \textit{Bayes point machine} is a single classifier that approximates the majority decision of an ensemble of classifiers. This paper observes that kernel interpolation is a Bayes point machine for Gaussian process classification. This observation facilitates the transfer of results from both \textit{ensemble theory} as well as an area of convex geometry known as \textit{Brunn-Minkowski theory} to derive PAC-Bayes risk bounds for kernel interpolation. Since large margin, infinite width neural networks are kernel interpolators, the paper's findings may help to explain generalisation in neural networks more broadly. Supporting this idea, the paper finds evidence that large margin, finite width neural networks behave like Bayes point machines too.
    \begin{center}
        {\Large\faGithub}~~\raisebox{0.45ex}{\url{https://github.com/jxbz/bpm}.}
    \end{center}
\end{abstract}
\section{Introduction}
\label{sec:intro}

The ability of a learner to perfectly interpolate the examples from their teacher and still generalise is a mystery of modern machine learning research \citep{benign}. Such \textit{benign overfitting} is visible in neural networks (NNs) trained to zero loss, Gaussian process (GP) posterior draws and interpolators in a reproducing kernel Hilbert space (RKHS).

Since kernel regression forms the backbone of GP inference \citep{Kanagawa2018GaussianPA} and GP inference emerges under various approximations to NN training \citep{radford, NEURIPS2018_5a4be1fa}, there may be a single unifying theory to explain benign overfitting. \citet{belkin18a} suggest that kernel interpolation may be the right place to start looking---but see Section \ref{sec:related} for an overview of other approaches.

\subsection{Kernel Interpolation}

In this paper, the \textbf{kernel interpolator} of a training set $X=\{x_1,...,x_n\}$ with labels $\Upsilon\in\R^n$ refers to:
\begin{equation*}\label{eq:kernel-interp}
    f_\Upsilon(x) = K_{xX}K_{XX}^{-1}\Upsilon,
\end{equation*}
for Gram vector $K_{xX}^{(i)}:=k(x,x_i)$, Gram matrix $K_{XX}^{(ij)} := k(x_i,x_j)$ and positive definite \textbf{kernel function} $k(\cdot,\cdot)$. Formally, $f(x)$ is the interpolator of $(X,\Upsilon)$ with minimum RKHS-norm. A binary classification is made via $\sign f(x)$.

\subsection{Knowledge vs.~Belief}

Kernel interpolation can be understood in terms of \textit{signal processing}. Given a training sample, there are infinitely many interpolating \textit{aliases}. If the underlying function is known to be smooth, then it makes sense to return the \textit{smoothest alias} with respect to a kernel \citep{poggio}.

This point of view underlies recent papers that quantify smoothness either in terms of spectral bias and frequency content \citep{pmlr-v119-bordelon20a} or directly via RKHS-norm \citep{make-mistakes}. The drawback of these approaches to studying generalisation is that prior knowledge is needed about the underlying function's  smoothness. %

This general drawback is sidestepped by PAC-Bayes theory, which replaces the need for prior \textit{knowledge} with a need for prior \textit{belief}. While PAC-Bayes bounds are best for learners with accurate prior belief, a degree of generalisation can still be guaranteed when this belief was mistaken \citep{McAllester98}. The results of this paper were born of an effort to transfer PAC-Bayes theory over to kernel interpolation.

\subsection{Ensembles vs.~Points}

There is a major hurdle to transferring PAC-Bayes theory over to the kernel setting. PAC-Bayes bounds hold for ensembles of classifiers where each classifier has an associated prior probability, while kernel interpolation returns a single classifier that is a deterministic function of the training sample and has no intrinsic notion of probability.

With that said, a special deterministic classifier can be extracted from a weighted ensemble. The \textit{Bayes classifier} reports the weighted majority vote over the ensemble, and is known to have very good generalisation behaviour: When the prior over teacher functions is known, the Bayes classifier is \textit{optimal} \citep{Devroye1996APT}. When the prior is misspecified, the Bayes classifier may still strongly outperform random ensemble members \citep{lacasse}.

Given the vast expressivity of learners like NNs and GPs, it is natural to ask: \textit{can a learner approximate their own Bayes classifier?} In other words, can a single classifier inherit the favourable properties of a diverse ensemble? This would clearly yield major savings in terms of both memory and compute. In learning theory, such an economical classifier is known as a \textit{Bayes point machine} \citep{bpms}.

This paper observes that \textit{kernel interpolation is a Bayes point machine for GP classification}. An analogous statement is well known in the context of regression: \textit{kernel interpolation is the mean of a GP regression posterior}. But the treatment of classification in this paper is more subtle, enabling the transfer of advanced results in voting theory over to kernel interpolation, making progress on an open ``dilemma'' raised by \citet{seeger} and shedding new light on the behaviour of NNs trained to large normalised margin.

\subsection{Contributions}

This paper both advances the underlying theory of Bayes point machines (BPMs), and also derives specific results relevant to GP, NN and kernel classification.

The paper makes two contributions to the theory of BPMs:
\begin{enumerate}[leftmargin=0.875cm,itemsep=0pt,topsep=0pt]
    \item[\S~\ref{sec:pessimist}] BPMs are often derived via a \textit{trick} that approximates the Bayes classifier by an ensemble's centre-of-mass \citep{herbrich_book}. Using a tool from convex geometry, this paper makes that trick more rigorous, showing that under mild conditions the centre-of-mass errs at no more than $\econst$ times the Gibbs error.
    \item[\S~\ref{sec:optimist}] BPMs are usually motivated by noting that the Bayes classifier errs at no more than twice the Gibbs error. This paper applies \textit{the $\mathcal{C}$-bound} \citep{lacasse} to demonstrate a situation where a BPM can greatly outperform the Gibbs error. This constitutes progress on an open problem \citep[Section 5.1]{seeger}.
\end{enumerate}

As for GPs, NNs and kernel methods, the paper shows that:
\begin{enumerate}[leftmargin=0.875cm,itemsep=0pt,topsep=0pt]
    \setcounter{enumi}{2}
    \item[\S~\ref{sec:k-bpm}] Kernel interpolators of both the \textit{centroid} and \textit{centre-of-mass} labels can be derived as the BPM of a GP classifier. The centre-of-mass interpolator is harder to compute but turns out to enjoy a smaller risk bound.
    \item[\S~\ref{sec:nn-bpm}] Large margin, infinite width NNs concentrate on kernel interpolators, so these models are BPMs too.
\end{enumerate}

Combining all of these insights, the paper:
\begin{enumerate}[leftmargin=0.875cm,itemsep=0pt,topsep=0pt]
    \setcounter{enumi}{4}
    \item[\S~\ref{sec:pac-bayes}] Derives PAC-Bayes risk bounds for both kernel interpolation and large margin, infinite width NNs. These PAC-Bayes bounds are found to be significantly tighter than a corresponding Rademacher bound \citep[Theorem 21]{rademacher}.
\end{enumerate}

Finally, on the experimental side, the paper finds that:
\begin{enumerate}[leftmargin=0.875cm,itemsep=0pt,topsep=0pt]
    \setcounter{enumi}{5}
    \item[\S~\ref{sec:expt-gp-k}] The centroidal kernel interpolator attains almost exactly the same error as a GP's Bayes classifier, supporting the claim that kernel interpolation is a BPM.
    \item[\S~\ref{sec:expt-nn}] Finite width, large margin multi-layer perceptrons (MLPs) closely match the majority vote of many small margin MLPs. This suggests that large margin, finite width NNs may also be modelled as BPMs.
\end{enumerate}
\section{Related Work}
\label{sec:related}

\textbf{Benign overfitting.} A learner's ability to interpolate and still generalise has been studied in linear methods \citep{benign,Chatterji2020FinitesampleAO}, and also in nonlinear methods via a connection between smooth interpolation and overparameterisation \citep{bubeck2021a}. It is also studied in NNs in the context of \textit{double descent} \citep{opper2001learning,Nakkiran2020Deep}. The promise of focusing on \textit{kernel interpolation} is that results may then be transferred directly to GPs \textit{and} infinite width NNs as well.

\textbf{Kernel interpolation.} Many links have been made between NNs, GPs and kernels \citep{radford,choandsaul,NEURIPS2018_5a4be1fa,Kanagawa2018GaussianPA}. Classic studies into the risk of kernel interpolation have worked via Rademacher complexity analysis \citep{rademacher}. More recent studies employ a teacher-student framework \citep{pmlr-v119-bordelon20a} or leverage information about the teacher's RKHS norm \citep{just-interpolate,make-mistakes}.

\textbf{PAC-Bayes theory.} Risk bounds derived via PAC-Bayes analysis \citep{ShaweTaylor1997APA,McAllester98} usually hold for ensembles of classifiers, including GP posteriors \citep{seeger} and stochastic NNs \citep{DR17}. Bounds for individual classifiers have been obtained via margin-based derandomisation \citep{NIPS2002_68d30981} both for support vector machines (SVMs) \citep{svm-pac-bayes} and NNs \citep{neyshabur2018a,Biggs2021OnMA}. Bounds have also been derived that hold individually for most of the ensemble \citep{rivasplataKSS20,Viallard2021AGF} and for mixtures of classifiers \citep{meirzhang,lacasse}.

\textbf{Bayes point machines.} An ensemble member that approximates the Bayes classifier is known as a Bayes point machine \citep{bpms}. Some papers approximate the Bayes classifier via the ensemble centre-of-mass in weight space \citep{billiards,bayeskc}. This approximation is exact under strong symmetry assumptions, and is more generally referred to as a \textit{trick} \citep{herbrich_book}. For perceptron learning, this approximation has been studied from the statistical mechanics perspective \citep{watkin}.

\textbf{Social choice theory.} While voting has been studied in machine learning in the context of boosting \citep{boosting}, bagging \citep{bagging} and Bayes classification \citep{Devroye1996APT}, voting is also studied in the design of democratic systems robust to paradox \citep{arrow}. For instance, how does one settle an election given the preferences of an electorate, while avoiding the \textit{voting paradox of \citet{condorcet}}? One solution is to return the \textit{Simpson-Kramer min-max point} \citep{simpson,Kramer1977ADM}, which is roughly the \textit{least widely disliked} platform. This point is closely related to the \textit{Tukey depth} in statistics \citep{Tukey1977ExploratoryDA} and, as this paper shows, to the Bayes point machine.
\section{Theory of the Bayes Point}
\label{sec:theory}
\begin{quote}
    \textit{Is the halfspace half empty or half full?}
\end{quote}

This section establishes basic definitions and also derives two elementary results about the Bayes point machine---referred to as \textit{Gibbs--BPM} lemmas. The first result is used in Section \ref{sec:pac-bayes} to extend PAC-Bayes risk bounds to kernel methods. The second addresses an open problem \citep{seeger} and offers a jumping-off point for future work. Each result directly mirrors a corresponding \textit{Gibbs--Bayes} lemma.

Consider a classifier $f : \mathcal{X}\times\mathcal{W}\to\R$, where $\mathcal{X}$ is the input space, $\mathcal{W}$ is the weight space and the binary decision is made by $\sign\circ f$. An \textit{ensemble of classifiers} shall be specified via a probability measure $Q$ over weight space $\mathcal{W}$. It will often make sense to think of and refer to $Q$ as a \textbf{posterior distribution}, since the paper will often restrict the support of $Q$ to the \textbf{version space} of a learning problem---meaning the subset of classifiers that correctly classify the training set. The ensuing theory could be generalised to a definition of version space that includes all classifiers that classify the training set to above, say, $95\%$ accuracy.

Given a posterior distribution $Q$, this section will consider classifying a fresh input $x\in\mathcal{X}$ in one of three ways:
\begin{enumerate}[itemsep=0pt,topsep=0pt]
    \item The \textbf{Gibbs classifier} returns a random prediction:
        \begin{equation*}
            \sign f(x;w), \qquad w\sim Q.
        \end{equation*}
    \item The \textbf{Bayes classifier} returns the majority vote:
        \begin{equation*}
            f_\mathrm{Bayes}(x) := \sign \Expect_{w\sim Q} \sign f(x;w).
        \end{equation*}
    \item The \textbf{BPM classifier} returns the simple average:
        \begin{equation*}
            f_\mathrm{BPM}(x) := \sign \Expect_{w\sim Q} f(x; w).
        \end{equation*}
\end{enumerate}

Two observations motivate the BPM classifier. First, it is obtained by reversing $\sign$ and $\Expect_{w\sim Q}$ in the Bayes classifier:
\begin{equation*}\label{eq:approx}
\underbrace{\sign \tikzmarknode{a}{\Expect}_{w\sim Q} \,\tikzmarknode{b}{\sign} f(x;w)}_{\text{Bayes classifier}} \approx \underbrace{\sign \Expect_{w\sim Q}f(x;w)}_{\text{BPM classifier}}.
\end{equation*}\tikz[remember picture, overlay]{\draw[latex-latex] ([yshift=0.15em,xshift=0.5em]a.north) to[bend left] ([yshift=0.15em,xshift=-0.2em]b.north);}%
This operator exchange is called the \textbf{BPM approximation}, the quality of which will be considered in detail in this paper. Second, suppose the classifier has \textit{hidden linearity}. In particular, consider classifier $f_{\mathrm{\phi}}(x;w) := \phi(x)^Tw$, where $\phi$ is an arbitrary nonlinear input embedding. Then:
\begin{equation*}\label{eq:linear}
\underbrace{\sign \Expect_{w\sim Q}f_\phi(x;w)}_{\text{BPM classifier}} = \sign f_\phi(x;\underbrace{\Expect_{w\sim Q}w}_{\mathclap{\text{centre-of-mass}}}).
\end{equation*}
In words: the BPM classifier is equivalent to a single classifier that uses the posterior $Q$'s centre-of-mass for weights. Therefore, by the BPM approximation, a linear classifier's BPM is a \textit{point approximation} to the Bayes classifier.

Of course, since the $\sign$ function is nonlinear, the BPM approximation is not correct in general---\citet{herbrich_book} calls it a \textit{trick}. In the case of hidden linearity (Equation \ref{eq:linear}), the approximation is correct when over half the ensemble $w\sim Q$ agrees with the centre-of-mass $\Expect_{w\sim Q}w$ on input $x$. This happens, for example, when the posterior $Q$ is point symmetric about the centre-of-mass \citep{herbrich_book}. But point symmetry is a strong assumption that does not hold for, say, the version space of a GP classifier.

The next two subsections rigorously connect the risk of the BPM classifier to the risk of the Gibbs and Bayes classifiers. Subsection \ref{sec:pessimist} employs an advanced tool from convex geometry to show that, under mild conditions, a linear classifier's BPM cannot perform substantially worse than the Gibbs classifier. Subsection \ref{sec:optimist} presents a more optimistic perspective, demonstrating a setting where the BPM classifier substantially outperforms the Gibbs classifier.

But first, it is useful to define the \textbf{Gibbs error}, the \textbf{Bayes error} and the \textbf{BPM error}. These notions of error each depend on a data distribution $\mathcal{D}$ over $\mathcal{X}\times\{\pm1\}$:
\begin{align*}
    \gibbserr &:= \Expect_{w\sim Q} &&\hspace{-2em}\Expect_{(x,y)\sim\mathcal{D}}\mathbb{I}\big[\sign f(x;w)\neq y\big]; \\
    \bayeserr &:= &&\hspace{-2em}\Expect_{(x,y)\sim\mathcal{D}}\mathbb{I}\big[f_\mathrm{Bayes}(x)\neq y\big]; \\
    \bpmerr &:= &&\hspace{-2em}\Expect_{(x,y)\sim\mathcal{D}}\mathbb{I}\big[f_\mathrm{BPM}(x)\neq y\big].
\end{align*}

\subsection{A Pessimistic Gibbs--BPM Lemma}
\label{sec:pessimist}

It is well known that the Bayes classifier cannot err at more than twice the Gibbs error \citep{herbrich_book}:

\begin{lemma}[Pessimistic Gibbs--Bayes]\label{lem:gibbs-bayes} For any ensemble,
\begin{equation*}
    \bayeserr \leq 2 \cdot \gibbserr.
\end{equation*}
\end{lemma}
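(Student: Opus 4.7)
The plan is to establish the inequality pointwise on each input-label pair $(x,y)$ and then integrate over the data distribution. Concretely, I would argue that whenever the Bayes classifier is wrong on some $(x,y)$, at least half of the ensemble $w\sim Q$ must also be wrong on $(x,y)$, so the indicator of Bayes error is dominated by twice the expected indicator of ensemble error at that point.

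The steps I would carry out, in order:

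\textbf{Step 1 (pointwise reduction).} Fix $(x,y)\in\mathcal{X}\times\{\pm 1\}$. Suppose $f_{\mathrm{Bayes}}(x)\neq y$. By the definition of the Bayes classifier, $\sign \Expect_{w\sim Q}\sign f(x;w) \neq y$, which means $y \cdot \Expect_{w\sim Q}\sign f(x;w) \leq 0$. Rewriting the expectation as the difference of the $Q$-probabilities that an ensemble member agrees versus disagrees with $y$, one gets $\Probc{w\sim Q}{\sign f(x;w) \neq y} \geq \tfrac{1}{2}$.

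\textbf{Step 2 (pointwise inequality).} The previous step says that whenever $\mathbb{I}[f_{\mathrm{Bayes}}(x)\neq y]=1$, the quantity $\Expect_{w\sim Q}\mathbb{I}[\sign f(x;w) \neq y]$ is at least $\tfrac{1}{2}$; and when $\mathbb{I}[f_{\mathrm{Bayes}}(x)\neq y]=0$ the inequality is trivial. Combining these cases yields
\begin{equation*}
    \mathbb{I}\big[f_{\mathrm{Bayes}}(x)\neq y\big] \leq 2\,\Expect_{w\sim Q}\mathbb{I}\big[\sign f(x;w)\neq y\big].
\end{equation*}

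\textbf{Step 3 (integrate and swap).} Take $\Expect_{(x,y)\sim\mathcal{D}}$ of both sides; by linearity this gives $\bayeserr$ on the left. On the right, swap the two expectations (Fubini/Tonelli, legal since the integrand is a nonnegative indicator) to recognise the inner double expectation as $\gibbserr$, obtaining $\bayeserr \leq 2\,\gibbserr$.

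The argument is essentially a one-line counting fact so there is no real obstacle; the only mildly delicate issue is the tie case $\Expect_{w\sim Q}\sign f(x;w)=0$, which one handles by fixing a convention for $\sign(0)$ (the inequality in Step 1 becomes $\geq\tfrac{1}{2}$ either way, since one of the two halves must include the label $y$, so the conclusion is unaffected).
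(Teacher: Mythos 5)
Your proof is correct, and it uses the standard argument that the paper relies on implicitly (the paper does not prove Lemma~\ref{lem:gibbs-bayes} itself, citing \citet{herbrich_book} instead). The pointwise-then-integrate structure you use is exactly the one the paper employs in its proof of the analogous Lemma~\ref{lem:gibbs-bpm}, with the threshold $1/2$ here playing the role that $1/\econst$ plays there.
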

This result is tagged \textit{pessimistic} since one generally hopes for the Bayes classifier to match or outperform the Gibbs classifier---the idea being that the majority vote should \textit{smooth out} the variance of the Gibbs classifier. 

The paper will now derive an analogous relation between the BPM and Gibbs error. The main idea is that, in the case of \textit{hidden linearity}, it is impossible for too significant a fraction of the ensemble to disagree with the centre-of-mass. This is made rigorous by an elegant result from convex geometry:

\begin{lemma}[Weighted Gr\"unbaum's inequality]\label{lem:grunbaum}
Let $Q$ be a log-concave probability density supported on a convex subset of $\R^d$ with positive volume. Then for any $v\in\R^d$:
\begin{equation*}
\Probe_{w\sim Q} \left[\sign [v^Tw] = \sign [v^T \Expect_{w^\prime\sim Q}w^\prime]\right] \geq 1/e.
\end{equation*}
\end{lemma}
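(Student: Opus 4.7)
The plan is to project $Q$ down to one dimension along the direction $v$, and then invoke the classical Gr\"unbaum inequality for log-concave measures on $\R$. First, I would let $t := v^Tw$ for $w \sim Q$ and let $\mu$ denote the pushforward density. Since $Q$ is log-concave on $\R^d$ and marginals of log-concave densities are log-concave (Pr\'ekopa's theorem, a corollary of Pr\'ekopa--Leindler), $\mu$ is a log-concave probability density on $\R$ with mean $m := v^T \Expect_{w' \sim Q}w'$. Under this reduction, the event $\{\sign(v^Tw) = \sign(v^T \Expect w')\}$ becomes the simpler event $\{\sign(t) = \sign(m)\}$ for $t \sim \mu$.

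\textbf{Inclusion.} Next, I would assume without loss of generality that $m > 0$ (if $m < 0$, flip $v$; the boundary case $m = 0$ is handled by a convention on $\sign(0)$ or by a continuity argument). Under this normalization, $\sign(m) = 1$ and the half-line $[m, \infty)$ is contained in $\{t \in \R : \sign(t) = \sign(m)\}$, so
\[
\Probe_{w \sim Q}\!\left[\sign(v^Tw) = \sign(v^T \Expect_{w' \sim Q}w')\right] \geq \mu([m, \infty)).
\]
It therefore suffices to establish the one-dimensional log-concave Gr\"unbaum inequality: for any log-concave probability density $\mu$ on $\R$ with mean $m$, $\mu([m, \infty)) \geq 1/e$. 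The bound is sharp, attained in the limit by shifted exponentials $\mu(t) = \lambda\, e^{-\lambda(t-a)}\mathbb{I}[t \geq a]$, for which the mass above the mean $a + 1/\lambda$ equals exactly $e^{-1}$.

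\textbf{Main obstacle.} The hard part is proving the 1D log-concave Gr\"unbaum bound. My preferred route is the Lov\'asz--Simonovits localization lemma, which reduces any integral inequality over log-concave measures on $\R^n$ to one-dimensional ``needle'' comparisons; the extremal needle is an exponential, from which $e^{-1}$ follows by direct integration. An alternative is to compare a general log-concave density to an exponential tail of matching mean via the three-functions theorem of Fradelizi and Meyer. Everything else in the argument---the Pr\'ekopa projection step and the inclusion $[m, \infty) \subseteq \{t : \sign(t) = \sign(m)\}$ for $m > 0$---is routine once the 1D bound is in hand.
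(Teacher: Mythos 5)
The paper does not prove Lemma~\ref{lem:grunbaum} itself; it cites the result from the mean-voter-theorem literature and merely notes that the underlying proof relies on the Pr\'ekopa--Borell theorem to establish log-concavity of the one-dimensional marginal along $v$. Your sketch follows exactly that route---Pr\'ekopa projection to a 1D log-concave $\mu$ with mean $m$, then the one-dimensional Gr\"unbaum bound $\mu([m,\infty))\ge 1/\econst$ with the shifted exponential as extremiser---so it is consistent with the cited argument; the only excess is invoking Lov\'asz--Simonovits localization for the final 1D step, which is heavier than needed once you are already in one dimension (a direct comparison of the log-concave tail to an exponential of matching mean suffices).
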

In words: if the sample space of a log-concave density is cut by any hyperplane, then the halfspace containing the centre-of-mass contains at least $1/e \approx 36\%$ of the probability~mass.

This result is a generalisation of \citet{grunbaum}'s inequality due to \citet{meanvoter}. The proof leverages an advanced result in Brunn-Minkowski theory known as the \textit{Pr\'ekopa-Borell theorem}. The result was derived in the context of \textit{social choice theory} to bound the proportion of an electorate with linear preferences that can disagree with the mean voter. But as the following lemma shows, it may also be used to bound the error of a Bayes point machine.

\begin{tcolorbox}[boxsep=0pt, arc=0pt,
    boxrule=0.5pt,
 colback=white]
\begin{lemma}[Pessimistic Gibbs--BPM]\label{lem:gibbs-bpm} Consider an ensemble of classifiers whose distribution at all inputs $x\in\mathcal{X}$ follows:
\begin{equation*}
    f_\phi(x;w) = w^T \phi(x), \qquad w\sim Q.
\end{equation*}
Here $\phi$ is an arbitrary nonlinear input embedding, and $Q$ is a log-concave probability density supported on a convex subset of $\R^d$ with positive volume. Then:
\begin{equation*}
    \bpmerr \leq \econst \cdot \gibbserr.
\end{equation*}
\end{lemma}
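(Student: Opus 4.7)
The plan is to invoke the Weighted Gr\"unbaum inequality (Lemma~\ref{lem:grunbaum}) pointwise in the input $x$, turning its statement about halfspace probability mass into a per-example lower bound on Gibbs risk by the BPM indicator. The key observation is that for the linear ensemble $f_\phi(x;w) = w^T \phi(x)$, the BPM decision on a fixed $x$ simplifies by linearity of expectation to $\sign[\phi(x)^T \Expect_{w\sim Q} w]$, which is precisely the quantity whose sign Gr\"unbaum compares.

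First I would fix an arbitrary $(x,y)$ in the support of $\mathcal{D}$ and apply Lemma~\ref{lem:grunbaum} with $v := \phi(x)$. Since $Q$ is by hypothesis log-concave on a convex subset of $\R^d$ with positive volume, the inequality yields
\[\Probe_{w\sim Q}\left[\sign(\phi(x)^T w) = \sign(\phi(x)^T \Expect_{w'\sim Q} w')\right] \geq 1/\econst.\]
Whenever the BPM errs on $(x,y)$, meaning $\sign[\phi(x)^T \Expect_{w'\sim Q} w'] \neq y$, the reference sign on the right-hand side equals $-y$, so at least a $1/\econst$ fraction of ensemble weights satisfy $\sign f_\phi(x;w) \neq y$. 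The hyperplane $\{w : \phi(x)^T w = 0\}$ has $Q$-measure zero because $Q$ admits a density on $\R^d$, so the convention chosen for $\sign(0)$ is immaterial.

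Putting this together yields the pointwise inequality
\[\Expect_{w\sim Q}\mathbb{I}[\sign f_\phi(x;w) \neq y] \geq \tfrac{1}{\econst}\,\mathbb{I}[f_\mathrm{BPM}(x) \neq y],\]
which holds trivially when the BPM is correct and by the preceding paragraph otherwise. Taking $\Expect_{(x,y)\sim\mathcal{D}}$ of both sides and substituting the definitions of $\gibbserr$ and $\bpmerr$ from the end of Section~\ref{sec:theory} gives $\gibbserr \geq \bpmerr/\econst$, which rearranges to the advertised bound.

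I do not expect a serious obstacle beyond this setup. The only genuine conceptual move is to identify the direction $v$ in Lemma~\ref{lem:grunbaum} with the feature vector $\phi(x)$ at each test point; everything else is a routine application of Fubini to exchange $\Expect_{w\sim Q}$ and $\Expect_{(x,y)\sim\mathcal{D}}$. The bulk of the mathematical work has been absorbed into the weighted Gr\"unbaum inequality itself, whose proof via the Pr\'ekopa--Borell theorem is cited rather than reproduced here.
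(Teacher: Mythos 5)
Your proof is correct and mirrors the paper's argument: apply the weighted Gr\"unbaum inequality with $v=\phi(x)$ to obtain the pointwise bound $\gibbserr(x,y)\geq\tfrac{1}{\econst}\bpmerr(x,y)$, then average over $\mathcal{D}$. The remark on the null set $\{w:\phi(x)^Tw=0\}$ is a reasonable extra care point but not a departure from the paper's approach.
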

\end{tcolorbox}

The lemma's conditions are fairly mild, holding for both kernel and GP classifiers with Gaussian (or truncated Gaussian) posteriors. This enables the lemma's use in Section \ref{sec:pac-bayes}. The result is similar in form to Lemma \ref{lem:gibbs-bayes}, and is also tagged \textit{pessimistic}. This is because its proof uses the fact that the centre-of-mass is always found in the halfspace containing at least a fraction $1/e$ of the mass. The optimist would hope to find the centre-of-mass in the heavier halfspace.

\begin{proof}[Proof of Lemma \ref{lem:gibbs-bpm}]
    First, consider the BPM and Gibbs error on a single datapoint $(x,y)$:
    \begin{align*}
    \bpmerr(x,y) &:= \mathbb{I}\left[\sign \Expect_{w\sim Q}w^T \phi(x)\neq y\right]; \\
    \gibbserr(x,y)&:=\Expect_{w\sim Q} \mathbb{I}\left[\sign w^T \phi(x)\neq y\right].
    \end{align*}
    When the BPM classifier is correct, $\bpmerr(x,y)=0$. When the BPM classifier is incorrect, $\bpmerr(x,y)=1$ and $\gibbserr(x,y) \geq 1/e$ by Lemma \ref{lem:grunbaum}. In either case:
    \begin{align*}
        \bpmerr(x,y)\leq \econst\cdot \gibbserr(x,y).
    \end{align*}
    Taking the average over $(x,y)\sim\mathcal{D}$ yields the result.
\end{proof}

\subsection{An Optimistic Gibbs--BPM Lemma}
\label{sec:optimist}

The optimist would expect the BPM approximation to be \textit{good}---that it should hold for \textit{most inputs}, say. And that the BPM error should not be much worse than the Bayes error. It makes sense to package this optimism into a definition. The \textbf{BPM approximation error} $\Delta$ is given by:
    \begin{equation*}\label{eq:bpm-approx}
        \Delta := \Expect_{(x,y)\sim\mathcal{D}}\mathbb{I}\big[f_\mathrm{BPM}(x)\neq f_\mathrm{Bayes}(x)\big].
    \end{equation*}
So $\Delta$ measures the proportion of inputs for which the BPM approximation fails, and the optimist would expect $\Delta$ to be small. This definition leads directly to the following lemma:
\begin{lemma}[Bayes--BPM]\label{lem:bayes-bpm}
    \begin{equation*}
        \bpmerr \leq \bayeserr + \Delta.
    \end{equation*}
\end{lemma}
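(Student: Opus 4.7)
The plan is to run a pointwise case analysis on $(x,y)$ and then take expectation over the data distribution. The key observation is that the BPM classifier can only misclassify $(x,y)$ in one of two ways: either it agrees with the Bayes classifier, in which case the Bayes classifier must also be wrong, or it disagrees with the Bayes classifier, in which case $(x,y)$ is counted among the disagreement events whose frequency defines $\Delta$.

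More concretely, first I would establish the pointwise inequality
\begin{equation*}
\mathbb{I}\big[f_\mathrm{BPM}(x) \neq y\big] \leq \mathbb{I}\big[f_\mathrm{Bayes}(x) \neq y\big] + \mathbb{I}\big[f_\mathrm{BPM}(x) \neq f_\mathrm{Bayes}(x)\big],
\end{equation*}
which follows by contrapositive: if both indicators on the right-hand side are zero, then $f_\mathrm{BPM}(x) = f_\mathrm{Bayes}(x) = y$, so the left-hand side is also zero. If either right-hand indicator is one, the inequality holds trivially since indicators are bounded by one. Then I would take $\Expect_{(x,y)\sim\mathcal{D}}$ of both sides and apply linearity of expectation; the first term on the right becomes $\bayeserr$, the second becomes $\Delta$, and the left becomes $\bpmerr$, yielding the claim.

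There is essentially no obstacle here — this is a union-bound style argument and the definitions of $\bpmerr$, $\bayeserr$ and $\Delta$ align so that the pointwise inequality integrates directly to the statement. The only subtlety worth noting is that the bound is tight precisely when the event $\{f_\mathrm{BPM}(x)\neq f_\mathrm{Bayes}(x)\}$ is disjoint from $\{f_\mathrm{Bayes}(x)\neq y\}$ under $\mathcal{D}$; in the generic case the bound will be loose, which is consistent with the ``optimistic'' framing of the lemma since one expects $\Delta$ to be small whenever the BPM approximation is accurate on typical inputs.
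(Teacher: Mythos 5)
Your proof is correct and follows essentially the same route as the paper: establish the pointwise indicator inequality $\mathbb{I}[f_\mathrm{BPM}(x)\neq y] \leq \mathbb{I}[f_\mathrm{Bayes}(x)\neq y] + \mathbb{I}[f_\mathrm{BPM}(x)\neq f_\mathrm{Bayes}(x)]$ and take expectation over $\mathcal{D}$. The paper phrases the case analysis by splitting on whether the BPM classifier is correct (and then on whether Bayes agrees with BPM), while you argue the contrapositive, but these are the same union-bound observation.
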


\begin{proof}
    First consider the BPM error, Gibbs error and BPM approximation error on a single datapoint $(x,y)$:
    \begin{align*}
    \bpmerr(x,y) &:= \mathbb{I}\left[f_\mathrm{BPM}(x)\neq y\right]; \\
    \bayeserr(x,y)&:= \mathbb{I}\left[f_\mathrm{Bayes}(x)\neq y\right] ;\\
    \Delta(x,y) &:= \mathbb{I}\big[f_\mathrm{BPM}(x)\neq f_\mathrm{Bayes}(x)\big].
    \end{align*}
    When the BPM classifier is correct, $\bpmerr(x,y)=0$. When the BPM clssifier is incorrect, $\bpmerr(x,y)=1$ and either $\bayeserr(x,y)=1$ and $\Delta(x,y)=0$ or vice versa. Thus:
    \begin{align*}
    \bpmerr(x,y) \leq \bayeserr(x,y) + \Delta(x,y).
    \end{align*}
    Taking the average over $(x,y)\sim\mathcal{D}$ yields the result.
\end{proof}

In the spirit of continued optimism, one may expect the Bayes classifier to outperform the Gibbs classifier. This is because the majority vote is intended to \textit{smooth out} the variance of the Gibbs classifier. This idea has been formalised via the $\mathcal{C}$-bound \citep{lacasse,germain}:

\begin{lemma}[Optimistic Gibbs--Bayes, a.k.a.~the $\mathcal{C}$-bound]\label{lem:gibbs-bayes-opt} Let $\alpha_{\mathrm{Gibbs}}\in[0,1]$ denote the average Gibbs agreement:
\begin{equation*}
\alpha_{\mathrm{Gibbs}}:=\Expect_{x\sim\mathcal{D}}\left[\left[\Expect_{w\sim Q}\sign f(x;w)\right]^2\right].
\end{equation*}
Then the Bayes error satisfies:
\begin{equation*}
    \bayeserr \leq 1 - \frac{(1-2 \cdot \gibbserr)^2}{\alpha_{\mathrm{Gibbs}}}.
\end{equation*}
\end{lemma}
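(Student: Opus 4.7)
The plan is to apply Cantelli's one-sided Chebyshev inequality to the ensemble's voting margin, viewed as a scalar random variable over the data distribution $\mathcal{D}$. Define $M(x,y) := y \cdot \Expect_{w\sim Q}\sign f(x;w) \in [-1,+1]$. Writing $y \cdot \sign f(x;w) = 1 - 2\,\mathbb{I}[\sign f(x;w) \neq y]$ (which holds almost surely under the mild assumption that $Q$ puts no mass on $\{w : f(x;w) = 0\}$), Fubini and linearity of expectation give $\Expect_{(x,y)\sim\mathcal{D}} M(x,y) = 1 - 2\gibbserr$. Because $y^2 = 1$, we also have $M(x,y)^2 = (\Expect_{w\sim Q} \sign f(x;w))^2$, so $\Expect[M^2] = \alpha_{\mathrm{Gibbs}}$, and hence $\Var M = \alpha_{\mathrm{Gibbs}} - (1-2\gibbserr)^2$.

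Next, I would express the Bayes error in terms of $M$. The Bayes classifier errs at $(x,y)$ exactly when $y \neq \sign \Expect_{w\sim Q}\sign f(x;w)$, which happens iff $M(x,y) \leq 0$ (tied votes, where $M = 0$, incur an error since $\sign 0 = 0 \neq y$). Thus $\bayeserr = \Prob{M \leq 0}$. The statement is trivial when $\gibbserr \geq 1/2$, so assume $\mu := 1 - 2\gibbserr > 0$. Cantelli's inequality applied to $-M$ with deviation $a = \mu$ yields $\Prob{M \leq 0} \leq \Var M / (\Var M + \mu^2)$; using $\Var M + \mu^2 = \Expect[M^2] = \alpha_{\mathrm{Gibbs}}$, the right-hand side simplifies to $1 - (1 - 2\gibbserr)^2/\alpha_{\mathrm{Gibbs}}$, which is the claimed bound.

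The proof is short once the right random variable is identified, so the only real obstacle is the initial reformulation: recognising that $M$'s first moment is governed by $\gibbserr$ while its second moment equals $\alpha_{\mathrm{Gibbs}}$, so that Cantelli converts these two moments into a tail bound on the Bayes error. If Cantelli's inequality is not taken for granted, a one-line derivation is available via Markov applied to $(\mu - M + c)_+^2$ at the optimal choice $c = \Var M / \mu$, which is worth spelling out but involves no real difficulty. Finally, a degenerate case worth flagging is $\alpha_{\mathrm{Gibbs}} = 0$, which forces $M \equiv 0$ almost surely; then $1 - 2\gibbserr = 0$ and the claim reduces to $\bayeserr \leq 1$.
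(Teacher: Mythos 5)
Your argument is the standard derivation of the $\mathcal{C}$-bound, and it is essentially correct: set $M(x,y):=y\cdot\Expect_{w\sim Q}\sign f(x;w)$, observe that $\Expect M = 1-2\gibbserr$, $\Expect M^2 = \alpha_{\mathrm{Gibbs}}$, and $\bayeserr = \Prob{M\le 0}$, then apply Cantelli's one-sided inequality at deviation $a=\Expect M$. This is exactly how \citet{lacasse} and \citet{germain} prove it; the paper does not re-derive the lemma but simply cites those works, so your proof matches the intended source rather than an in-paper argument.

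One correction: your claim that ``the statement is trivial when $\gibbserr \ge 1/2$'' is wrong for the strict inequality $\gibbserr > 1/2$. At $\gibbserr = 1/2$ the right-hand side equals $1$ and the bound is vacuous, but for $\gibbserr > 1/2$ the right-hand side is strictly below $1$, so the bound is a real constraint---and in fact it can be \emph{false}. Take $Q$ to be a point mass on a single classifier that is wrong on every input: then $\gibbserr = \bayeserr = 1$ and $\alpha_{\mathrm{Gibbs}} = 1$, so the right-hand side is $1-(1-2)^2/1 = 0$, contradicting $\bayeserr = 1$. Cantelli in the direction you use it requires $a = \Expect M > 0$, i.e.\ $\gibbserr < 1/2$; when $\Expect M < 0$ the same calculation yields a lower bound on $\Prob{M\le 0}$, not an upper bound. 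The cited references state the $\mathcal{C}$-bound under the hypothesis that the expected margin is positive (equivalently $\gibbserr < 1/2$), and the paper's statement implicitly assumes this. Your proof should therefore state $\gibbserr < 1/2$ as an assumption (with the $\gibbserr = 1/2$ case handled separately as vacuous), rather than dismiss $\gibbserr \ge 1/2$ as trivial.
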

Lemma \ref{lem:gibbs-bayes-opt} is capable of certifying that $\bayeserr \ll \gibbserr$. This happens when the Gibbs classifier is very noisy, such that the Gibbs error falls just below one half and the average Gibbs agreement $\alpha_{\mathrm{Gibbs}}$ is small.

Combining Lemmas \ref{lem:bayes-bpm} and \ref{lem:gibbs-bayes-opt} yields the following lemma:

\begin{tcolorbox}[boxsep=0pt, arc=0pt, boxrule=0.5pt, colback=white]
\begin{lemma}[Optimistic Gibbs--BPM]\label{lem:gibbs-bpm-opt}
    Let $\alpha_{\mathrm{Gibbs}}$ denote the average Gibbs agreement as in Lemma \ref{lem:gibbs-bayes-opt}, and let $\Delta$ denote the BPM approximation error. Then:
    \begin{equation*}
        \bpmerr \leq 1 - \frac{(1-2 \cdot \gibbserr)^2}{\alpha_{\mathrm{Gibbs}}} + \Delta.
    \end{equation*}
\end{lemma}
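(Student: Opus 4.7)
The statement is essentially a chain of two lemmas already established immediately above it, so the plan is almost entirely compositional rather than requiring any new ideas. My approach would be: first apply Lemma \ref{lem:bayes-bpm} to decompose $\bpmerr$ into a Bayes error term plus the approximation error $\Delta$; then apply Lemma \ref{lem:gibbs-bayes-opt} (the $\mathcal{C}$-bound) to replace $\bayeserr$ with the Gibbs-based quantity $1 - (1-2\gibbserr)^2/\alpha_{\mathrm{Gibbs}}$.

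More concretely, I would write:
\begin{equation*}
\bpmerr \leq \bayeserr + \Delta \leq \left(1 - \frac{(1-2\cdot\gibbserr)^2}{\alpha_{\mathrm{Gibbs}}}\right) + \Delta,
\end{equation*}
where the first inequality is Lemma \ref{lem:bayes-bpm} and the second is Lemma \ref{lem:gibbs-bayes-opt}. Both lemmas hold unconditionally for any posterior $Q$ and any data distribution $\mathcal{D}$ (no log-concavity or hidden linearity needed here), so there is no compatibility issue between the hypotheses and nothing further to verify.

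There is no real obstacle, but I would note two small things worth mentioning for clarity. First, the bound is only meaningful when $\gibbserr < 1/2$, since otherwise $(1-2\gibbserr)^2/\alpha_{\mathrm{Gibbs}}$ need not produce a useful reduction and the right-hand side may exceed one; in that regime one should interpret the lemma as the trivial $\bpmerr \leq 1$. Second, the bound is genuinely stronger than Lemma \ref{lem:gibbs-bpm} precisely when $\Delta$ is small and $\alpha_{\mathrm{Gibbs}}$ is small with $\gibbserr$ near (but below) $1/2$, so the statement should be presented alongside a remark that its utility depends on the optimist's belief that $\Delta \ll 1$. Neither remark is needed inside the proof itself, which consists of the two-line chain above.
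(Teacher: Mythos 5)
Your proof is correct and matches the paper's own derivation exactly: the paper introduces Lemma~\ref{lem:gibbs-bpm-opt} with the phrase ``Combining Lemmas~\ref{lem:bayes-bpm} and~\ref{lem:gibbs-bayes-opt} yields the following lemma,'' which is precisely the two-step chain you wrote out. Nothing more to add.
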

\end{tcolorbox}
This result implies that under reasonable conditions---when $\alpha_{\mathrm{Gibbs}}$ and $\Delta$ are both very small---the BPM classifier can substantially outperform the Gibbs classifier. This provides a crisp theoretical motivation for the significance of the Bayes point machine, addressing an open problem \citep[Section 5.1]{seeger}. While Lemma \ref{lem:gibbs-bpm-opt} is not explored further in this paper, the authors believe that this result presents an exciting jumping-off point for future work.

\section{NNs, GPs and Kernel Interpolators}
\label{sec:nn-gp-k}

This section establishes two main results: first, the sign of a kernel interpolator is the BPM of a GP classifier. And second, at large margin the function space of an infinite width NN concentrates on a kernel interpolator. Taken together, these results imply that margin maximisation (or, dually, weight norm minimisation) converts an infinite width NN into a BPM---as illustrated schematically in Figure \ref{fig:games}.

\subsection{Kernel Interpolation is a Bayes Point Machine}
\label{sec:k-bpm}

Consider a GP with covariance function $k(\cdot,\cdot)$, a set of $n$ training points $X=\{x_1,...,x_n\}$ and a vector of $n$ binary labels $Y=[y_1,...,y_n]$. It is useful to define the Gram vector $K_{xX}^{(i)}:=k(x,x_i)$ and Gram matrix $K_{XX}^{(ij)} := k(x_i,x_j)$.

The paper constructs a \textbf{GP Gibbs classifier} by sampling functions from the GP prior and rejecting those functions with incorrect sign on the training points. Formally, this corresponds to a GP posterior with zero--one likelihood. Predictions at a test point $x$ may be generated in three steps:
\begin{align*}
\text{Sample labels:}\quad &\Upsilon \sim \mathcal{N}\big(0,K_{XX} | \sign \Upsilon = Y \big);\\
\text{Sample noise:}\quad &\xi \sim \mathcal{N}\left(0, K_{xx} - K_{xX}K_{XX}^{-1} K_{Xx}\right);\\
\text{Return:}\quad &\sign [K_{xX} K_{XX}^{-1} \Upsilon + \xi ].
\end{align*}
The corresponding \textbf{GP BPM classifier} is then obtained by exchanging operators in the \textbf{GP Bayes classifier}:
\begin{align*}
	&\sign \tikzmarknode{a}{\Expect}_{\xi,\Upsilon}\,\tikzmarknode{b}{\sign} [K_{xX} K_{XX}^{-1} \Upsilon + \xi ] \qquad \tag{Bayes classifier}\\
	&\qquad\approx\sign \Expect_{\xi,\Upsilon} [K_{xX} K_{XX}^{-1} \Upsilon + \xi ]\tag{BPM classifier}\\
	&\qquad=\sign [K_{xX} K_{XX}^{-1} Y_{\com} ]. \tag{kernel interpolator}
\end{align*}
\tikz[remember picture, overlay]{\draw[latex-latex] ([yshift=0.15em,xshift=0.5em]a.north) to[bend left] ([yshift=0.15em,xshift=-0.2em]b.north);}%
So the GP BPM classifier is equivalent to the sign of the kernel interpolator with \textbf{centre-of-mass labels} $Y_{\com} := \Expect_\Upsilon \Upsilon$.

For reasons of both analytical and computational tractability, it is also convenient to modify the GP posterior to employ an isotropic distribution over training labels. The \textbf{isotropic Gibbs classifier} classifies a fresh point $x$ in three steps:
\begin{align*}
\text{Sample labels:}\quad &\Upsilon \sim \mathcal{N}\big(0,\mathbb{I}\cdot \abs{K_{XX}}^{1/n} | \sign \Upsilon = Y \big);\\
\text{Sample noise:}\quad &\xi \sim \mathcal{N}\left(0, K_{xx} - K_{xX}K_{XX}^{-1} K_{Xx}\right);\\
\text{Return:}\quad &\sign [K_{xX} K_{XX}^{-1} \Upsilon + \xi ].
\end{align*}
The corresponding \textbf{isotropic BPM classifier} is obtained by exchanging operators in the \textbf{isotropic Bayes classifier}:
\begin{align*}
	&\sign \tikzmarknode{a}{\Expect}_{\xi,\Upsilon}\,\tikzmarknode{b}{\sign} [K_{xX} K_{XX}^{-1} \Upsilon + \xi ]\qquad \tag{Bayes classifier}\\
	&\qquad\approx\sign \Expect_{\xi,\Upsilon} [K_{xX} K_{XX}^{-1} \Upsilon + \xi ]\tag{BPM classifier}\\
	&\qquad=\sign [K_{xX} K_{XX}^{-1} Y ]. \tag{kernel interpolator}
\end{align*}
\tikz[remember picture, overlay]{\draw[latex-latex] ([yshift=0.15em,xshift=0.5em]a.north) to[bend left] ([yshift=0.15em,xshift=-0.2em]b.north);}%
So the istropic BPM classifier is nothing but the sign of the kernel interpolator with \textbf{centroidal labels} Y.

In Section \ref{sec:pac-bayes}, it turns out that the centre-of-mass interpolator enjoys a smaller risk bound than the centroidal interpolator.

\subsection{Infinite Width NNs as Bayes Point Machines}
\label{sec:nn-bpm}
\begin{figure}
    \begin{minipage}{\linewidth}
      \centering
      \raisebox{-0.5\height}{\includegraphics[width=0.4\linewidth]{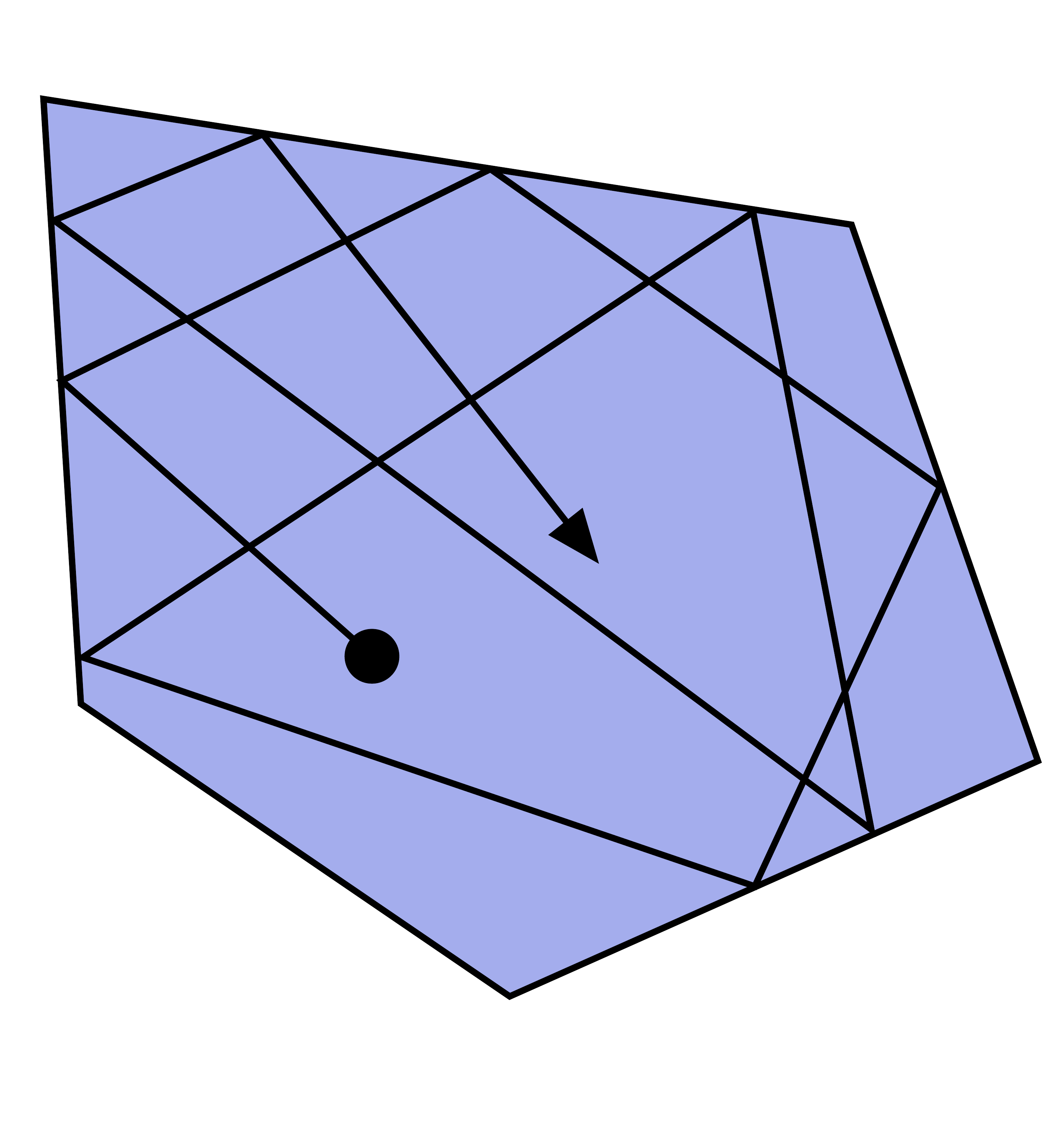}}
      \hspace{.2in}
      \raisebox{-0.5\height}{\includegraphics[width=0.4\linewidth]{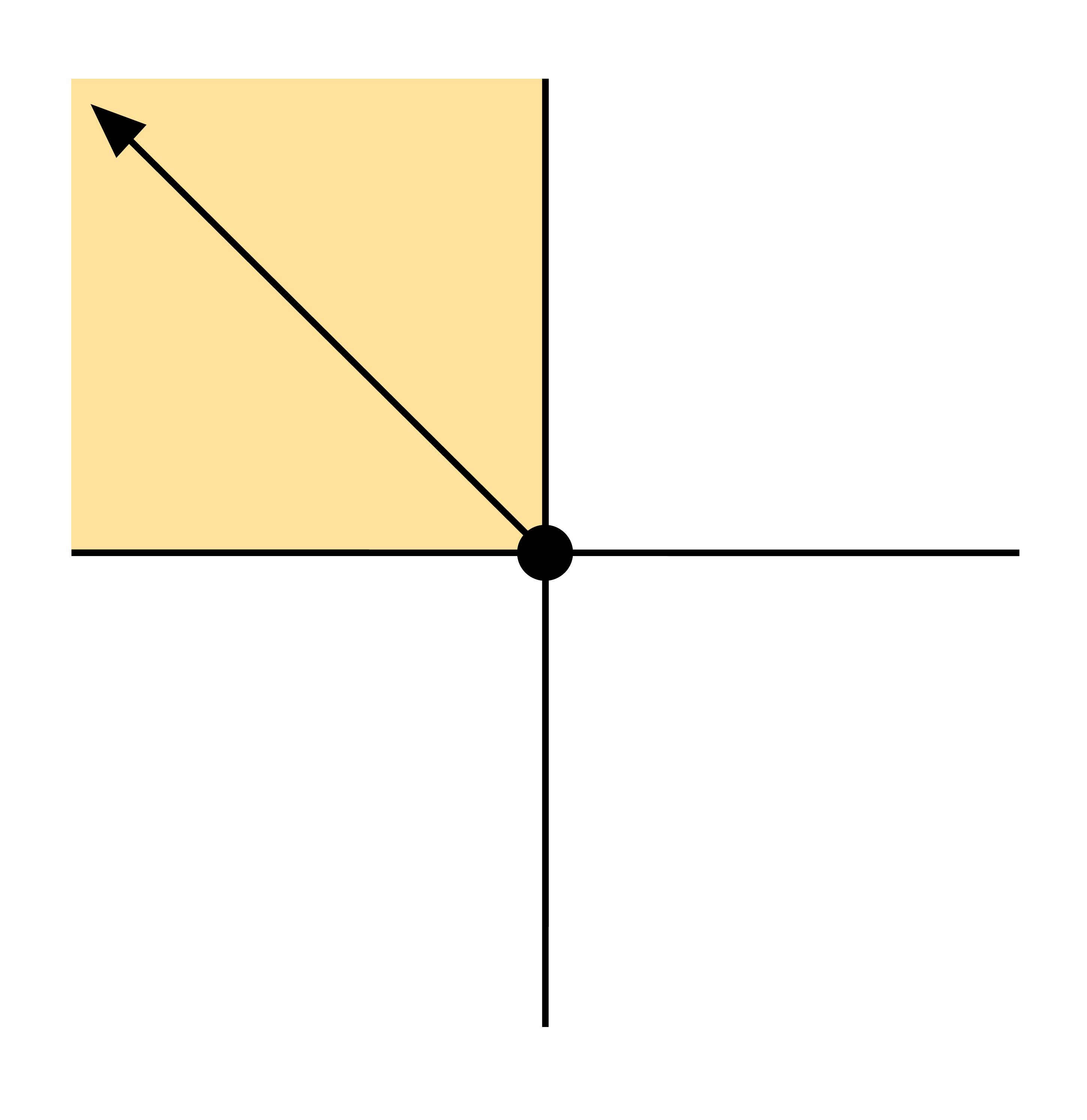}}
    \end{minipage}
    \hspace{2em}
    \vspace{-10pt}
    \caption{Left: In the weight space of a perceptron, the version space is a polytope. \citet{billiards} suggests \textit{playing billiards in weight space}: averaging the ergodic trajectory of a reflected billiard ball to approximate the Bayes classifier. Right: In the function space of an NN binary classifier, where the axes measure the training outputs, the version space is an orthant. This paper suggests \textit{playing baseball in function space}: maximising the normalised margin on the training data to approximate the Bayes classifier.}
    \label{fig:games}
\end{figure}

Consider an $L$-layer multi-layer perceptron $f_{L} : \mathcal{X}\times\mathcal{W}\to\R$ with weight matrices $w=(W_1,...,W_L)\in\mathcal{W}$ and nonlinearity set to $\mathrm{relu}(\cdot):=\max(0,\cdot)$:
\begin{equation*}
    f_L(x;w):= W_L \circ \mathrm{relu} \circ W_{L-1} \circ ... \circ \mathrm{relu} \circ W_1 (x).
\end{equation*}
A \textbf{prior distribution} $P_\sigma$ over weight space $\mathcal{W}$ is constructed by sampling each weight entry iid $\mathcal{N}(0,\sigma^2)$. This induces a prior over functions with mean and covariance given by:
\begin{align*}
    \mu_\sigma(x) &:= \Expect_{w\sim P_\sigma} [f_L(x;w)] = 0; \\
    k_\sigma(x,x^\prime) &:= \Expect_{w\sim P_\sigma} [f_L(x;w)f_L(x^\prime;w)] = \sigma^{2L} k_{1}(x,x^\prime).
\end{align*}
The first equality follows since $\Expect_{w\sim P}{W_L}=0$. The second is due to the degree-$L$ positive homogeneity of the $\mathrm{relu}$ MLP. It follows that, for $n$ training inputs $X=\{x_1,...,x_n\}$ and a test input $x$, the Gram matrix, vector and scalar satisfy:
\begin{equation*}
    K_{XX} = \sigma^{2L}\widehat{K}_{XX}; \; K_{xX} = \sigma^{2L}\widehat{K}_{xX}; \; K_{xx} = \sigma^{2L}\widehat{K}_{xx},
\end{equation*}
where $K$ and $\widehat{K}$ correspond to $k_\sigma$ and $k_1$, respectively.

By the NN--GP correspondence \citep{radford,lee2018deep,g.2018gaussian}, as the MLP width is sent to infinity, the prior over functions converges to a GP with covariance $k_\sigma(x,x^\prime)$. Consider using this GP to classify a test point $x$ by \textit{regressing} to training inputs $X$ and binary labels $Y$. One is free to first scale up the labels by a \textbf{margin} parameter $\gamma>0$---see Figure \ref{fig:games}. Conditioned on interpolating $(X,\gamma Y)$, a posterior prediction is given by: 
{\small\begin{align*}
    & \sign\left(K_{xX}K_{XX}^{-1}(\gamma Y) + \eta\cdot\sqrt{K_{xx}-K_{xX}K_{XX}^{-1}K_{Xx}} \right) \\
    &= \sign\left(\widehat{K}_{xX}\widehat{K}_{XX}^{-1} Y + \frac{\sigma^L}{\gamma}\cdot\eta\cdot\sqrt{\widehat{K}_{xx}-\widehat{K}_{xX}\widehat{K}_{XX}^{-1}\widehat{K}_{Xx}} \right),
\end{align*}}%
for $\eta\sim\mathcal{N}(0,1)$. So by taking the \textbf{normalised margin} $\gamma/\sigma^L \to \infty$, an NN--GP's entire function space in effect \textit{concentrates} on the kernel interpolator $\widehat{K}_{xX}\widehat{K}_{XX}^{-1} Y$, which is itself a Bayes point machine by the results of Section \ref{sec:k-bpm}.
\section{Kernel PAC-Bayes}
\label{sec:pac-bayes}

This section combines the results from Sections \ref{sec:theory} and \ref{sec:nn-gp-k} with a novel bound on Gaussian orthant probabilities in order to derive risk bounds for kernel interpolators---and also for infinite width NNs by the results of Section \ref{sec:nn-bpm}.

The starting point is a well-known bound on the Gibbs error, that follows from Theorem 3 of \citet{Langford01boundsfor}:

\begin{lemma}[Gibbs PAC-Bayes]\label{lem:pac-bayes}
Let $P$ be a prior over functions realised by a classifier. With probability $1-\delta$ over the $n$ iid training examples drawn from $\mathcal{D}$, for any posterior $Q$ over functions that correctly classify the training data:
\begin{equation*}
    \gibbserr \leq 1 - \exp \left(- \frac{\kl(Q||P) + \log(2n/\delta)}{n-1} \right).
\end{equation*}
\end{lemma}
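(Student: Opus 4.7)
The plan is to derive this as an immediate corollary of a standard PAC-Bayes Chernoff-type bound applied to the Gibbs classifier, specialised to a posterior $Q$ whose support lies entirely in the version space of the training data.

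First I would invoke the classical PAC-Bayes inequality in Langford--Caruana form: with probability at least $1-\delta$ over the draw of $n$ iid training examples, for every posterior $Q$ absolutely continuous with respect to $P$,
\begin{equation*}
\mathrm{kl}\!\left(\widehat{\eps} \,\big\|\, \gibbserr\right) \leq \frac{\kl(Q\|P) + \log(2n/\delta)}{n-1},
\end{equation*}
where $\widehat{\eps} := \Expect_{w \sim Q}\tfrac{1}{n}\sum_{i=1}^n \mathbb{I}[\sign f(x_i;w)\neq y_i]$ is the empirical Gibbs error and $\mathrm{kl}(a\|b) := a \log(a/b) + (1-a)\log\tfrac{1-a}{1-b}$ denotes the binary KL divergence. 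The standard derivation of this inequality uses the Donsker--Varadhan change-of-measure lemma applied to the exponential moment $\exp\!\left((n-1)\,\mathrm{kl}(\widehat{R}(w)\|R(w))\right)$, controls its prior expectation by a binomial tail moment of order $2n$, and closes with a single application of Markov's inequality.

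Next I would use the assumption that $Q$ charges only the version space: every weight in the support of $Q$ correctly classifies the training sample, so $\widehat{\eps} = 0$ almost surely. Substituting $\widehat{\eps} = 0$ collapses the binary KL divergence to $\mathrm{kl}(0\|\gibbserr) = -\log(1 - \gibbserr)$, so the displayed bound becomes
\begin{equation*}
-\log(1-\gibbserr) \leq \frac{\kl(Q\|P) + \log(2n/\delta)}{n-1}.
\end{equation*}
Exponentiating both sides and rearranging recovers the statement of the lemma exactly.

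The only genuine obstacle is the bookkeeping of constants: the specific factors $n-1$ and $2n$ arise from a particular choice of exponential moment in the change-of-measure step, and matching them to Langford--Caruana's Theorem 3 requires invoking their binomial moment estimate rather than, say, the Seeger/McAllester variant which would instead yield a $\sqrt{n}$-style log term and $n$ in the denominator. Once that choice is fixed, the remainder is purely the one-line algebraic inversion described above, and no further probabilistic reasoning is required beyond what is already packaged in the cited PAC-Bayes inequality.
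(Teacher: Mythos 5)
Your proposal is correct and matches the paper's approach: the paper simply cites Theorem~3 of \citet{Langford01boundsfor}, and your derivation spells out how the stated bound follows by substituting the empirical Gibbs error $\widehat{\eps}=0$ (forced by restricting $Q$ to the version space), collapsing $\mathrm{kl}(0\|\gibbserr)=-\log(1-\gibbserr)$, and exponentiating. You also correctly flag that the specific constants $n-1$ and $\log(2n/\delta)$ are inherited from the Langford--Caruana form rather than the Seeger/Maurer variant.
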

To apply this lemma to a GP classifier, an appropriate KL divergence is needed. A draw from the GP prior correctly classifies the training data when the sampled labels $\Upsilon$ on the training inputs have the correct sign: $\sign \Upsilon = Y$. Thus, for the purposes of Lemma \ref{lem:pac-bayes}, it is enough to only consider distributions over the training labels. The prior, posterior and approximate posterior are then given by:
\begin{align*}
    P_{\mathrm{GP}}: \qquad & \Upsilon\sim\mathcal{N}\big(0,K_{XX}\big); \hspace{3.15em}\\
    Q_{\mathrm{GP}}: \qquad & \Upsilon\sim\mathcal{N}\big(0,K_{XX} | \sign \Upsilon = Y\big);\\
    Q_{\mathrm{iso}}: \qquad & \Upsilon\sim\mathcal{N}\big(0, \mathbb{I}\cdot \abs{K_{XX}}^{1/n} | \sign \Upsilon = Y\big).
\end{align*}
To derive the corresponding KL divergences, it is first useful to define the \textbf{Gaussian orthant probability} $P_Y$ of the orthant picked out by the binary vector $Y\in\{\pm1\}^n$ via:
\begin{equation*}\label{eq:gop}
    P_Y : = \Probe_{\Upsilon \sim P_{\mathrm{GP}}}[\sign \Upsilon = Y].
\end{equation*}
It is also useful to define another kernel complexity measure:
\begin{align*}
    \mathcal{A}(k,X,Y):= n\cdot(\log 2 - \half) \;+ \qquad\qquad\qquad\qquad\\
    \abs{K_{XX}}^{1/n}\cdot\left[ (\half - \tfrac{1}{\pi})\trace K_{XX}^{-1} + \tfrac{1}{\pi} Y^T K_{XX}^{-1}Y \right]. \nonumber
\end{align*}
This paper then obtains the following exact KL divergences:
\begin{tcolorbox}[boxsep=0pt, arc=0pt,
    boxrule=0.5pt,
 colback=white]
\begin{lemma}[KL divergences on an orthant]\label{lem:gop}
    \begin{align*}
        \kl(Q_{\mathrm{GP}} || P_{\mathrm{GP}}) &= \log (1/P_Y) \\
        &\leq \mathcal{A}(k,X,Y) = \kl(Q_{\mathrm{iso}} || P_{\mathrm{GP}}).
    \end{align*}
\end{lemma}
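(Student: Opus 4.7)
The proof splits naturally into three parts, matching the chain of equality-inequality-equality in the statement.

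\textbf{Step 1 (the left equality, $\kl(Q_{\mathrm{GP}} \Vert P_{\mathrm{GP}}) = \log(1/P_Y)$).} Since $Q_{\mathrm{GP}}$ is simply $P_{\mathrm{GP}}$ restricted and renormalised on the orthant $\set{O}_Y := \{\Upsilon : \sign \Upsilon = Y\}$, its density is $p_{\mathrm{GP}}(\Upsilon)/P_Y$ on $\set{O}_Y$ and zero elsewhere. Substituting into the definition of KL gives
\begin{equation*}
\kl(Q_{\mathrm{GP}} \Vert P_{\mathrm{GP}}) = \int_{\set{O}_Y} \tfrac{p_{\mathrm{GP}}(\Upsilon)}{P_Y}\log\tfrac{1}{P_Y}\idiff\Upsilon = \log(1/P_Y).
\end{equation*}

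\textbf{Step 2 (the middle inequality).} I would use the standard variational fact that the restricted distribution $Q_{\mathrm{GP}}$ is the information projection of $P_{\mathrm{GP}}$ onto the set of distributions supported in $\set{O}_Y$. Concretely, for any $Q$ supported in $\set{O}_Y$, writing $p_{\mathrm{GP}} = P_Y \cdot q_{\mathrm{GP}}$ on $\set{O}_Y$ and expanding yields
\begin{equation*}
\kl(Q \Vert P_{\mathrm{GP}}) = \kl(Q \Vert Q_{\mathrm{GP}}) - \log P_Y \geq -\log P_Y,
\end{equation*}
with the inequality being nonnegativity of KL. Applying this with $Q = Q_{\mathrm{iso}}$ and combining with Step 1 gives $\log(1/P_Y) \leq \kl(Q_{\mathrm{iso}} \Vert P_{\mathrm{GP}})$.

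\textbf{Step 3 (the right equality, $\kl(Q_{\mathrm{iso}} \Vert P_{\mathrm{GP}}) = \mathcal{A}(k,X,Y)$).} This is the computational part. Let $\sigma^2 := \abs{K_{XX}}^{1/n}$, so that the isotropic covariance $\sigma^2 \Id$ has the same determinant as $K_{XX}$. Writing out both log-densities, the log-det terms cancel exactly, leaving
\begin{equation*}
\log\tfrac{q_{\mathrm{iso}}(\Upsilon)}{p_{\mathrm{GP}}(\Upsilon)} = n \log 2 - \tfrac{1}{2\sigma^2}\|\Upsilon\|^2 + \tfrac{1}{2}\Upsilon^T K_{XX}^{-1}\Upsilon,
\end{equation*}
where the $n\log 2$ comes from normalising the isotropic density against a single orthant (whose mass under $\mathcal{N}(0,\sigma^2 \Id)$ is $2^{-n}$ by symmetry). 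I then take $\Expect_{Q_{\mathrm{iso}}}$ and use the coordinatewise structure of the isotropic truncated Gaussian: each $\Upsilon_i$ is an independent signed half-normal with $\Expect \Upsilon_i^2 = \sigma^2$ and $\Expect \Upsilon_i = y_i \sigma \sqrt{2/\pi}$, hence $\Expect[\Upsilon_i \Upsilon_j] = \tfrac{2}{\pi}\sigma^2 y_i y_j$ for $i \neq j$. The trace term gives $\Expect\|\Upsilon\|^2 = n\sigma^2$, and splitting the quadratic form $\Upsilon^T K_{XX}^{-1}\Upsilon$ into diagonal and off-diagonal contributions yields $\sigma^2[(1-\tfrac{2}{\pi})\trace K_{XX}^{-1} + \tfrac{2}{\pi} Y^T K_{XX}^{-1} Y]$. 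Collecting everything and substituting $\sigma^2 = \abs{K_{XX}}^{1/n}$ matches $\mathcal{A}(k,X,Y)$ term by term.

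The main bookkeeping obstacle is Step 3, specifically getting the coefficients right on the $(\tfrac{1}{2}-\tfrac{1}{\pi})$ and $\tfrac{1}{\pi}$ terms; this hinges on correctly splitting diagonal from off-diagonal entries of $K_{XX}^{-1}$ and using $y_i^2 = 1$ to re-assemble $Y^T K_{XX}^{-1} Y - \trace K_{XX}^{-1}$. Steps 1 and 2 are essentially one-line applications of standard KL identities.
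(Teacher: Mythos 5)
Your proposal is correct and follows essentially the same route as the paper's own proof: Step 1 matches verbatim, Step 2 uses the same KL chain-rule decomposition $\kl(Q_{\mathrm{iso}}\Vert P_{\mathrm{GP}}) = \kl(Q_{\mathrm{iso}}\Vert Q_{\mathrm{GP}}) + \log(1/P_Y)$ together with nonnegativity of KL, and Step 3 is the same second-moment computation for the coordinatewise truncated Gaussian, relying on the same identity $\Expect_{Q_{\mathrm{iso}}}[\Upsilon_i\Upsilon_j] = \abs{K_{XX}}^{1/n}[\delta_{ij} + \tfrac{2}{\pi}Y_iY_j(1-\delta_{ij})]$ and the same cancellation of log-determinants via the choice $\sigma^2 = \abs{K_{XX}}^{1/n}$. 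The only difference is cosmetic: you prove the inequality before the closed-form evaluation of $\kl(Q_{\mathrm{iso}}\Vert P_{\mathrm{GP}})$, whereas the paper does the reverse.
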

\end{tcolorbox}
\begin{proof} To establish the first equality, observe that:
\begin{equation*}
    \kl(Q_{\mathrm{GP}} || P_{\mathrm{GP}}) = \Expect_{\Upsilon\sim Q_{\mathrm{GP}}} \log(1/P_Y)= \log (1/P_Y).
\end{equation*}
The last equality is derived by first observing that:
\begin{align*}
    \kl(Q_{\mathrm{iso}} || P_{\mathrm{GP}}) = \Expect_{\Upsilon \sim Q_{\mathrm{iso}}} \log \frac{2^n\cdot\econst^{-\half\Upsilon^2\abs{K_{XX}}^{-1/n}}}{\econst^{-\half\Upsilon^TK_{XX}^{-1}\Upsilon}} \qquad\\
    = n \log 2 + \half \Expect_{\Upsilon \sim \Q_{\mathrm{iso}}} [\Upsilon^T (K_{XX}^{-1} - \mathbb{I}\abs{K_{XX}}^{-1/n})\Upsilon].
\end{align*}
To complete the result, one must substitute in the identity:
\begin{gather*}
    \Expect_{\Upsilon \sim \Q_{\mathrm{iso}}}[\Upsilon_i\Upsilon_j] = \abs{K_{XX}}^{1/n}\cdot\left[\delta_{ij} + \tfrac{2}{\pi} Y_i Y_j (1-\delta_{ij})\right].
\end{gather*}

Finally, the inequality follows via:
\begin{align*}
    \kl(Q_{\mathrm{iso}} || P_{\mathrm{GP}}) &= \Expect_{\Upsilon\sim Q_{\mathrm{iso}}}\left[\log\tfrac{Q_{\mathrm{iso}}(\Upsilon)}{Q_{\mathrm{GP}}(\Upsilon)} + \log\tfrac{Q_{\mathrm{GP}}(\Upsilon)}{P_{\mathrm{GP}}(\Upsilon)}\right]\\
    &= \kl(Q_{\mathrm{iso}}||Q_{\mathrm{GP}}) + \log(1/P_Y),
\end{align*}
and noting that $\kl(Q_{\mathrm{iso}}||Q_{\mathrm{GP}}) \geq 0$.
\end{proof}

By combining Lemmas \ref{lem:gibbs-bpm}, \ref{lem:pac-bayes} and \ref{lem:gop} with the observation in Section \ref{sec:k-bpm} that kernel interpolators are BPMs corresponding to linear classifiers with log-concave posteriors supported on a convex set, the following result is immediate:
 \begin{tcolorbox}[boxsep=0pt, arc=0pt,
    boxrule=0.5pt,
 colback=white]
\begin{theorem}[Kernel PAC-Bayes]\label{thm:k-bpm} Sample $n$ training points $(X,Y)$ iid from $\mathcal{D}$. Construct a kernel interpolator $f_{\Upsilon}(x) := K_{xX} K_{XX}^{-1} \Upsilon$ and consider its risk:
\begin{align*}
    \eps_{\Upsilon} &:= \Expect_{(x,y)\sim\mathcal{D}} \mathbb{I}[\sign f_{\Upsilon}(x) \neq y].
\end{align*}
With probability $1-\delta$ over the training sample:
\begin{align*}
    \eps_{Y} &\leq \econst \cdot \left[1-\exp\left(-\frac{\mathcal{A}(k,X,Y) + \log(2n/\delta)}{n-1}\right)\right].
\end{align*}
Or, for the \textit{centre-of-mass labels} $Y_{\com} := \Expect_{\Upsilon \sim Q_{\mathrm{GP}}} \Upsilon$, with probability $1-\delta$ over the training sample:
\begin{align*}
    \eps_{Y_{\com}} &\leq \econst \cdot \left[ 1-\exp\left(-\frac{\log (1/P_Y) + \log(2n/\delta)}{n-1} \right)\right].
\end{align*}
\end{theorem}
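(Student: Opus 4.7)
The plan is to chain three ingredients: the Pessimistic Gibbs--BPM inequality (Lemma \ref{lem:gibbs-bpm}), the Gibbs PAC-Bayes bound (Lemma \ref{lem:pac-bayes}), and the KL divergence identities (Lemma \ref{lem:gop}). Section \ref{sec:k-bpm} has already identified the two relevant BPM--Gibbs pairs: the centroidal kernel interpolator $\sign[K_{xX}K_{XX}^{-1}Y]$ is the BPM of the isotropic Gibbs classifier with posterior $Q_{\mathrm{iso}}$, and the centre-of-mass interpolator $\sign[K_{xX}K_{XX}^{-1}Y_{\com}]$ is the BPM of the GP Gibbs classifier with posterior $Q_{\mathrm{GP}}$. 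So the two cases of the theorem correspond to applying the same template with these two different posteriors.

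Next I would verify the hypotheses of Lemma \ref{lem:gibbs-bpm} for each pair. The kernel interpolator $K_{xX}K_{XX}^{-1}\Upsilon$ is linear in $\Upsilon$ through the embedding $\phi(x) := K_{XX}^{-1}K_{Xx}$. To absorb the additive test-point noise $\xi$, I would treat the joint variable $(\Upsilon,\xi)$ as the weight vector: its density factorises as a truncated Gaussian on the orthant $\{\Upsilon : \sign\Upsilon = Y\}$ times a zero-mean Gaussian on $\xi$, which is log-concave on the convex support (orthant) $\times\, \R$ of positive volume. Under this joint posterior, the classifier $\sign[K_{xX}K_{XX}^{-1}\Upsilon + \xi]$ is exactly of the form $w^T\phi(x)$ required by Lemma \ref{lem:gibbs-bpm}, yielding $\bpmerr \leq \econst \cdot \gibbserr$ for both the centroidal and centre-of-mass cases.

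To finish, apply Lemma \ref{lem:pac-bayes} with prior $P_{\mathrm{GP}}$. Both $Q_{\mathrm{iso}}$ and $Q_{\mathrm{GP}}$ are supported on the version space by construction, so the lemma's support requirement is met, and Lemma \ref{lem:gop} supplies the exact KL divergences $\mathcal{A}(k,X,Y)$ and $\log(1/P_Y)$ respectively. Substituting each KL into the Gibbs PAC-Bayes bound and then multiplying through by $\econst$ via Lemma \ref{lem:gibbs-bpm} produces the two displayed inequalities of the theorem.

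The only real friction I anticipate is bookkeeping around the noise $\xi$. Lemma \ref{lem:gibbs-bpm} is stated for a single posterior over a single weight vector, whereas $\xi$ formally varies with the test input. The proof of Lemma \ref{lem:gibbs-bpm} is however pointwise in $(x,y)$, so it is enough to apply it independently at each test input with the joint $(\Upsilon,\xi_x)$ posterior, whose log-concavity and convex support have just been verified; averaging the pointwise inequality over $\mathcal{D}$ then recovers the global bound. Modulo this caveat, the theorem follows by composition.
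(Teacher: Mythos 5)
Your proposal is correct and follows essentially the same route as the paper, which simply states that Theorem~\ref{thm:k-bpm} is ``immediate'' from combining Lemmas~\ref{lem:gibbs-bpm}, \ref{lem:pac-bayes} and \ref{lem:gop} with the BPM identifications of Section~\ref{sec:k-bpm}. You go somewhat further than the paper in explicitly checking the hypotheses of Lemma~\ref{lem:gibbs-bpm}---in particular by folding the test-point noise $\xi$ into a joint log-concave weight vector $(\Upsilon,\xi)$ on orthant$\,\times\,\R$ and observing that the pointwise nature of Lemma~\ref{lem:gibbs-bpm}'s proof handles the $x$-dependence of $\xi$; this is careful bookkeeping the paper glosses over. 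One small point you leave implicit, which the paper addresses in the text before Lemma~\ref{lem:gop}, is why the KL between the full function-space posterior and prior reduces to the KL between the marginals on $\Upsilon$: since the posterior conditions only on $\sign\Upsilon = Y$, the conditional law of $f$ off the training set is unchanged, so the extra coordinates contribute nothing to the KL.
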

\end{tcolorbox}

Four important remarks are in order:

First, the complexity term $\log(1/P_Y)$ measures the \textit{degree of surprise} experienced upon observing data sample $(X,Y)$ after fixing kernel $k(\cdot,\cdot)$. The smaller the surprise, the smaller the risk bound on $f_{Y_{\com}}(x)$. The bound is non-vacuous when the sample is \textit{sufficiently unsurprising}.

Second, $\mathcal{A}(k,X,Y)\geq \log (1/P_Y)$ by Lemma \ref{lem:gop}, so the centre-of-mass interpolator $f_{Y_{\com}}(x)$ enjoys a smaller risk bound than the centroidal interpolator $f_Y(x)$. If $f_{Y_{\com}}(x)$ can be computed, it may indeed outperform $f_Y(x)$.

Third, the bounds admit a functional analytic interpretation. The term $Y^T K_{XX}^{-1}Y=( K_{XX}^{-1}Y)^TK_{XX}( K_{XX}^{-1}Y)$ appearing in $\mathcal{A}(k,X,Y)$ is the squared RKHS-norm of $f_Y(x)$. Similarly, $\log(1/P_Y)$ is a form of \textit{log-sum-exp} aggregate of the squared RKHS-norm across the version space:
{\small\begin{equation*}
    \log(1/P_Y) = - \log\int\limits_{\mathclap{\sign \Upsilon = Y}}\exp(-\half \|f_\Upsilon\|_{\mathcal{H}}^2) \idiff{\Upsilon} + \half\log(2\pi)^n \abs{K_{XX}}.
\end{equation*}}%
Fourth, the theorem applies (with probability one) to infinitely wide NNs whose normalised margin is sent to infinity ($\gamma/\sigma^L \to \infty$) by the argument given in Section \ref{sec:nn-bpm}.
\begin{figure*}
    \centering
    \includegraphics[width=0.368\linewidth]{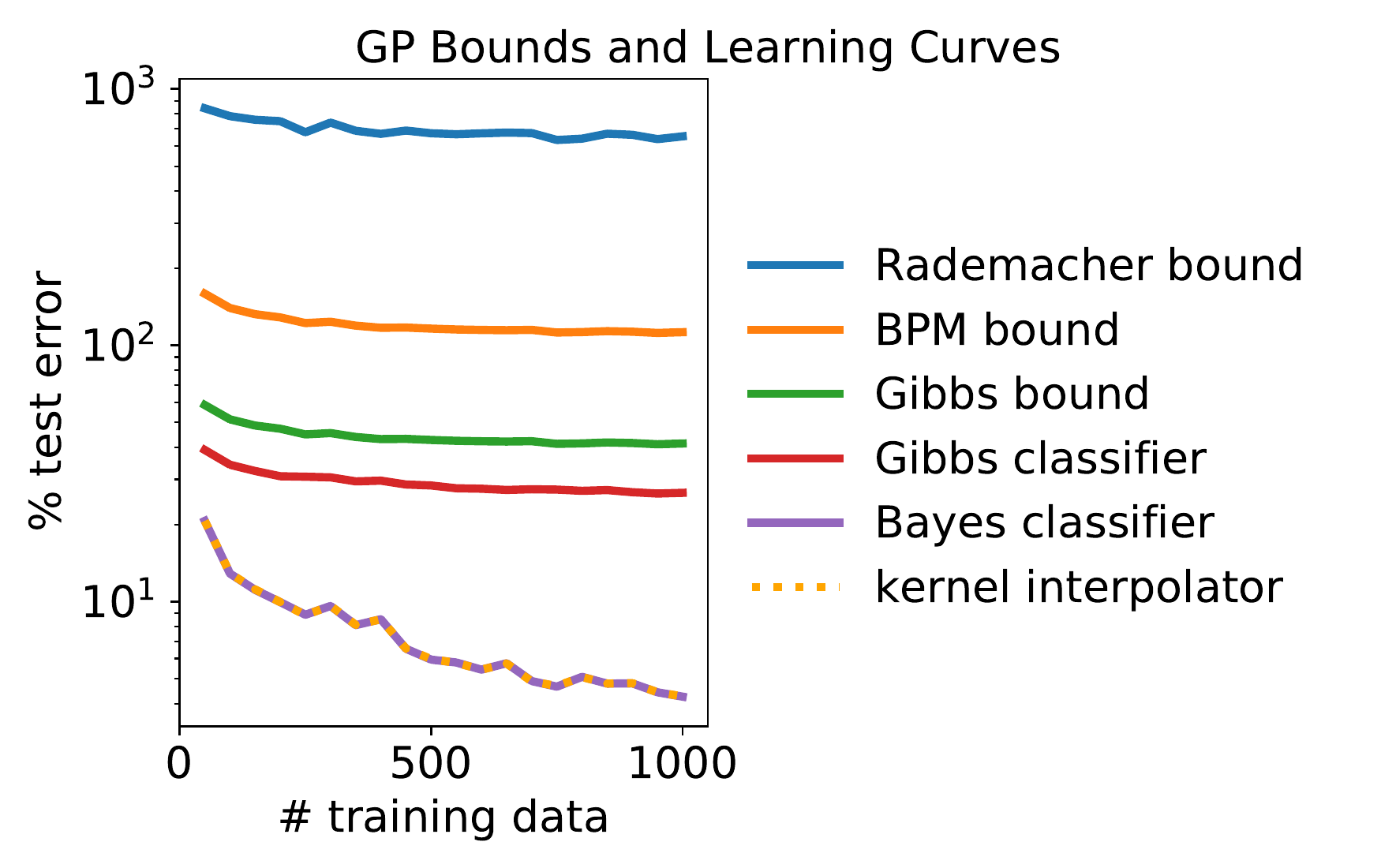}\includegraphics[width=0.315\linewidth]{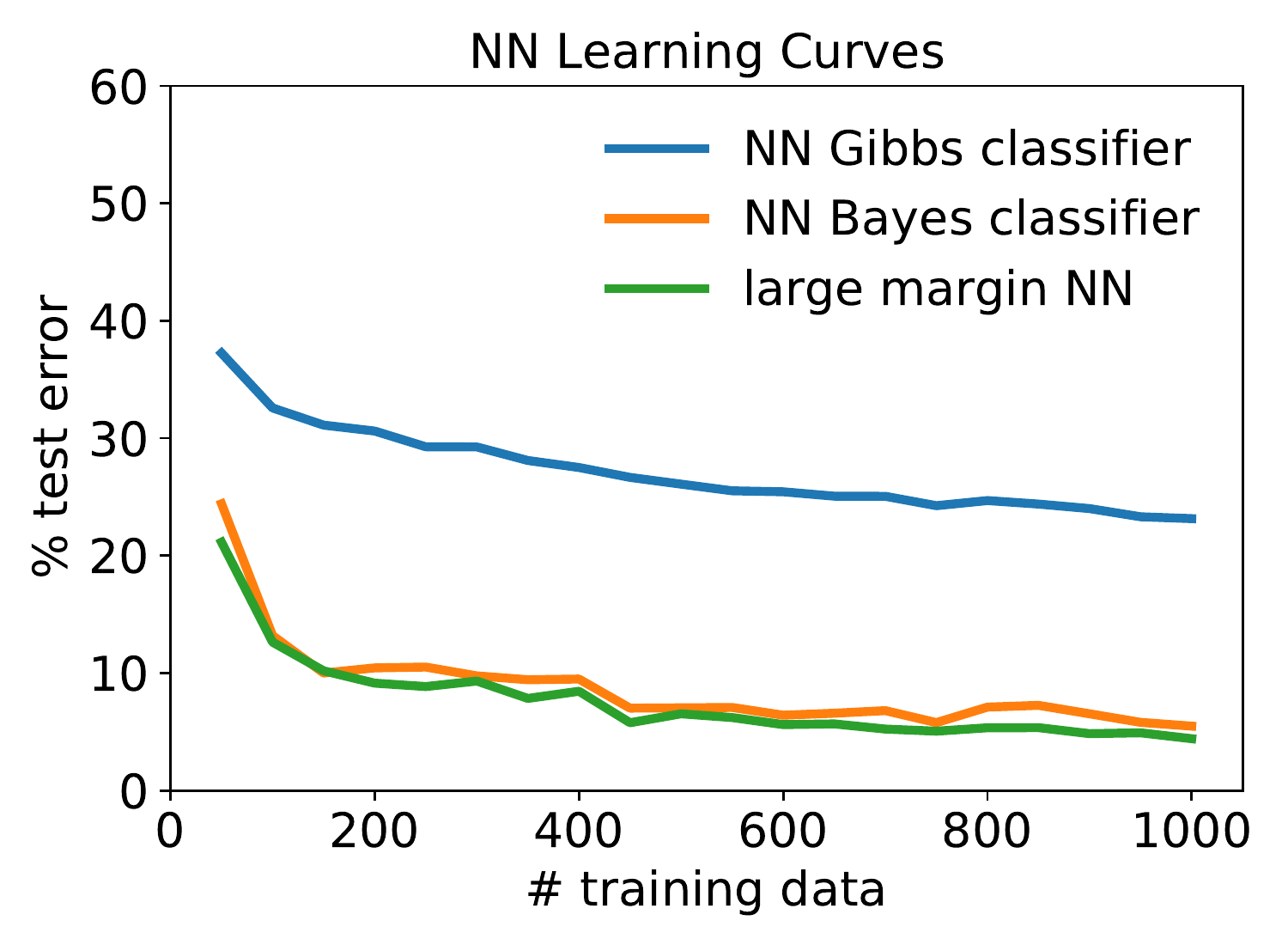}\includegraphics[width=0.315\linewidth]{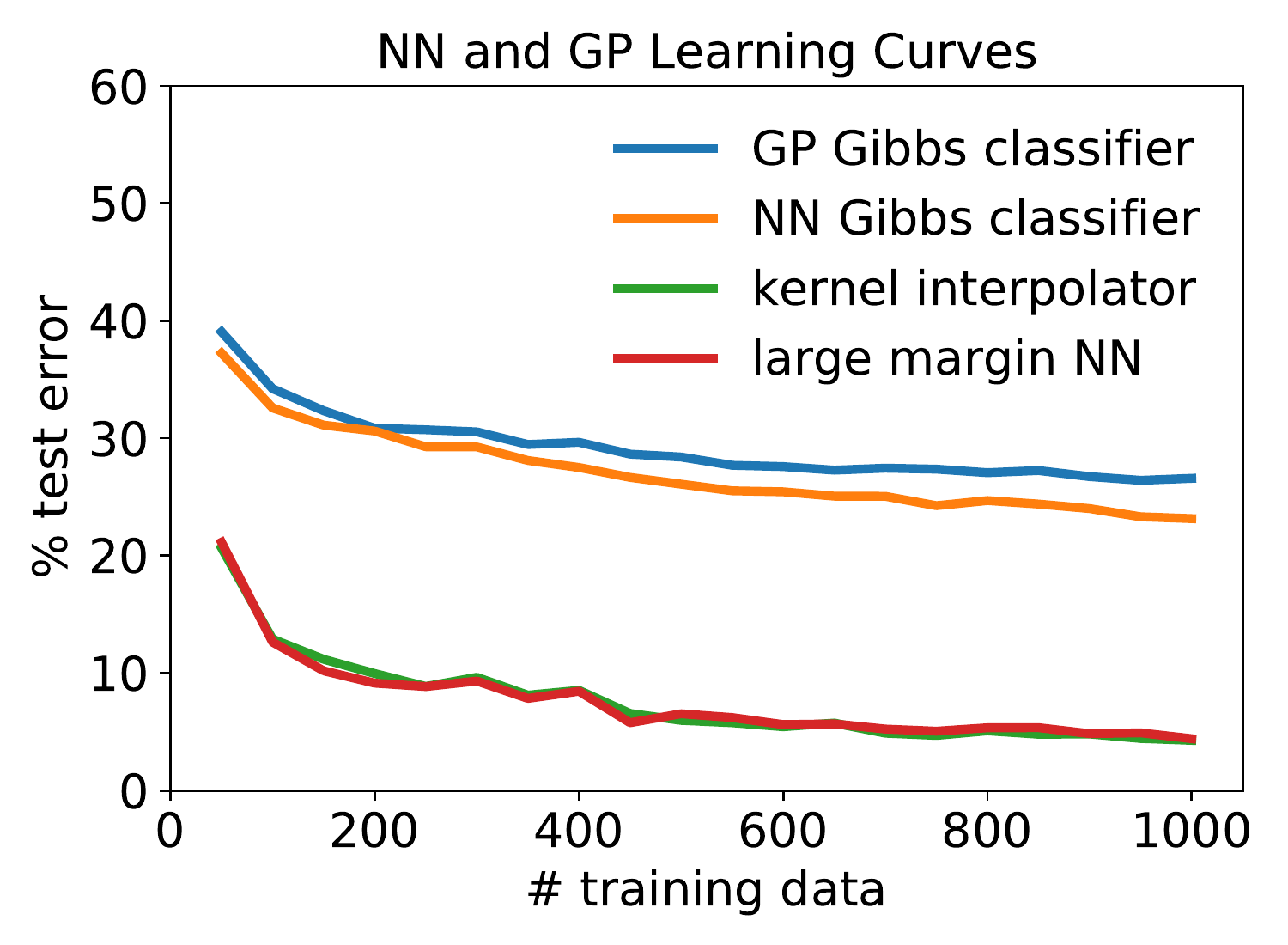}
\caption{\textbf{Left:} Testing the theory for kernel interpolation. This plot serves two purposes: First, it compares the risk bounds in Theorem \ref{thm:k-bpm} and Lemma \ref{lem:pac-bayes} to a classic Rademacher bound of \citet{rademacher}. The Gibbs bound is non-vacuous, while the BPM bound is roughly an order-of-magnitude tighter than the Rademacher bound---although both are vacuous. Second, the plot displays near perfect agreement between the empirical \textit{isotropic Bayes classifier} and the \textit{centroidal kernel interpolator}, validating this paper's central theme. \textbf{Middle:} Testing the theory for finite width multi-layer perceptrons (MLPs). This plot compares the test error of an \textit{ensemble of 501 small-margin MLPs}---each trained to interpolate labels sampled from the label orthant---to a \textit{single large-margin MLP}. The large-margin MLP closely matches the Bayes classifier. \textbf{Right:} Overlaying the kernel and finite width NN results. This plot displays close agreement between a single large-margin MLP and a kernel interpolator that uses the \textit{neural network--Gaussian process} (NN--GP) equivalent kernel. Overall, these results suggest that finite width NNs trained to large margin may indeed approximate their own Bayes classifier.}
    \label{fig:gp}
\end{figure*}
\section{Experiments}
\label{sec:expt}

The purpose of this section is both to test the developed theory and to assess how well it may transfer to finite width NNs. An extensive investigation involving varied datasets and network architectures was beyond this paper's wherewithal and is left to future work. But the authors believe that the results in Figure \ref{fig:gp} are already quite interesting.

The experimental setup involved even/odd classification of \textit{MNIST handwritten digits} \citep{lecun2010mnist}. The GP and kernel experiments used the \textit{compositional arccosine kernel} \citep{choandsaul,daniely,lee2018deep} which is the NN--GP equivalent kernel of an $L$-layer $\mathrm{relu}$ MLP. For two inputs $x,x^\prime \in \R^{d_0}$, it is given by:
\begin{equation*}
    k(x,x^\prime) := \underbrace{h \circ ... \circ h}_{L-1 \text{ times}}\left(\frac{x^T x^\prime}{d_0}\right),
\end{equation*}
where $h(t):=\tfrac{1}{\pi}\cdot [ \sqrt{1-t^2} + t\cdot (\pi - \arccos t)].$ Inputs were normalised to $\|x\|_2^2 = d_0$ so that $k(x,x)=1$.

The experiments resorted to the centroidal kernel interpolator since---despite having better theoretical properties---the centre-of-mass kernel interpolator seems intractable to compute. Correspondingly, the GP experiments used the isotropic Gibbs and Bayes classifiers---see Section \ref{sec:k-bpm}.

The finite width NN experiments used width-1000, depth-7 $\mathrm{relu}$ MLPs trained using the \textit{Nero optimiser} \citep{nero2021}. \textbf{Small margin NNs} were each trained to fit a label vector drawn $\Upsilon \sim Q_{\mathrm{iso}}$ by minimising the loss:
\begin{equation*}
    \mathcal{L}_{\Upsilon}(w) := \sqrt{\frac{1}{n}\sum_i (f(x_i,w) - \Upsilon_i)^2}.
\end{equation*}
The \textbf{large margin NN} was trained to minimise $\mathcal{L}_{100\cdot Y}$.

\subsection{Comparing Bounds}
\label{sec:expt-gp-k}

First the PAC-Bayes bounds of Theorem \ref{thm:k-bpm} and Lemma \ref{lem:pac-bayes} were compared to a Rademacher bound \citep[Theorem 21]{rademacher}, which in this paper's setting says that the risk of the centroidal kernel interpolator obeys:
\begin{align*}
    \eps_Y \leq 4\cdot  \sqrt{Y^T K_{XX}^{-1}Y/n} + \mathrm{confidence\,term}.
\end{align*}
This paper neglected the confidence term, so the Rademacher curve in Figure \ref{fig:gp} (left) is a slight underestimate. Nevertheless, the Rademacher bound was nearly an order-of-magnitude worse than the BPM bound, which was itself a factor $\econst$ worse than the non-vacuous Gibbs bound.

\subsection{Quality of the BPM Approximation}
\label{sec:expt-nn}

To what extent does a Bayes point machine really reflect the majority behaviour of an entire ensemble? For NN--GPs, this question was studied by comparing the test error of the centroidal kernel interpolator to both the Gibbs and Bayes error of an ensemble of $10^5$ GP posterior draws. As can be seen in Figure \ref{fig:gp} (left) the kernel interpolator almost perfectly recovered the error of the Bayes classifier, and both substantially outperformed the Gibbs classifier.

For finite width NNs, the Gibbs and Bayes classifiers were approximated by training an ensemble of 501 small margin NNs. These were then compared to the performance of a single large margin NN. As can be seen in Figure \ref{fig:gp} (middle), the performance of the large margin NN closely matched the test performance of the approximate Bayes classifier, and both outperformed the approximate Gibbs classifier.

Finally, the NN-GP and finite width NN results are overlayed in Figure \ref{fig:gp} (right).
\section{Discussion and Future Work}

This section highlights connections to existing research tracks, and the potential for exciting future work, by extracting three concrete suggestions from the developed theory.

\subsection*{Suggestion \#1: Interpolate the Centre-of-Mass}
Prior work comments on how \textit{arbitrary} it is to classify by interpolating binary labels $Y$, since any labelling $\Upsilon$ in the orthant $\sign \Upsilon = Y$ yields correct training predictions \citep{make-mistakes}. The theory in this paper suggests that interpolating centre-of-mass labels $Y_{\com} := \Expect_{\Upsilon \sim Q_{\mathrm{GP}}} \Upsilon$ is more principled for two reasons: First, Section \ref{sec:k-bpm} show that it more directly approximates the GP Bayes classifier. Second, Theorem \ref{thm:k-bpm} shows that it enjoys a smaller risk bound.

Conceptually, the centre-of-mass kernel interpolator involves using the kernel's prior to re-label the data before fitting. This bears a striking resemblance to the idea of \textit{self-distillation} \citep{Furlanello2018BornAN}, which trains a \textit{teacher} NN on the training data, and then retrains a \textit{student} NN on the teacher's predictions. Self-distillation and related techniques such as \textit{label smoothing} \citep{label-smoothing} and \textit{label mixup} \citep{zhang2018mixup} have been found to improve generalisation performance in practice.

Studying self-distillation as an approximate centre-of-mass BPM could potentially unify and extend prior studies on self-distillation that use both Bayes theory \citep{Zhang2020SelfDistillationAI} and functional analysis \citep{mobahi}.

\subsection*{Suggestion \#2: Maximise the Normalised Margin}
While this suggestion is perhaps unsurprising, the authors feel it may have value in putting standard NN techniques on a more solid footing. Section \ref{sec:nn-bpm} showed that to convert an infinite width NN into a BPM, one must select an interpolator which maximises a quantity $\gamma / \sigma^{L}$. The \textit{margin} $\gamma$ measures the size of the training predictions, and the quantity $\sigma$ measures the scale of the weights at each layer. In NN classification, this motivates one of two strategies:
\begin{enumerate}[,itemsep=0.2pt,topsep=0pt,label=\Alph*]
    \Myitem \begin{minipage}{.95\linewidth}
        Use a margin maximizing loss \citep{rosset2003margin} such as cross-entropy and fix the weight norms;
    \end{minipage}
    \Myitem \begin{minipage}{.95\linewidth}
        Pair a loss function that targets a fixed margin such as mean squared error with $\ell_2$ regularisation. 
    \end{minipage}
\end{enumerate}
Of course, both strategies are in use \citep{hui2021evaluation}. This theoretical suggestion ties in closely with prior work on the implicit bias of optimisation procedures \citep{barrett2021implicit,smith2021on} in their ability to target large margin functions \citep{implicit-bias}.

\subsection*{Suggestion \#3: The Evidence is not Enough}
Prior work on Bayesian model selection suggests choosing a GP kernel \citep{NEURIPS2018_d465f14a} or NN architecture \citep{Prez2020GeneralizationBF,pmlr-v139-immer21a} by maximising the \textit{marginal likelihood} of the data under the model, also known as the \textit{evidence} for the model \citep{mackay}. Due to the difficultly of directly computing the evidence for a high-dimensional model, an \textit{evidence lower bound} (ELBO) is often maximised as proxy \citep{wu2018deterministic}.

The evidence for a GP binary classifier is nothing but the prior probability of the version space $\sign \Upsilon = Y$. This is denoted by the Gaussian orthant probability $P_Y$ in Section \ref{sec:pac-bayes}. This evidence $P_Y$ and its lower bound $\exp - \mathcal{A}(k,X,Y)$ take centre stage in Theorem \ref{thm:k-bpm}, where they are used to upper bound the risk of kernel interpolation. 

But looking at the empirical results in Figure \ref{fig:gp} (left), both the BPM bound and the Gibbs bound are very loose in comparison to the actual performance of both the Bayes classifier and the kernel interpolator. In other words: ELBO maximisation---and also approximate evidence maximisation \citep{Prez2020GeneralizationBF}---is optimising a loose bound on the risk of the most desirable single classifier: the Bayes point machine. This is \citet{seeger}'s ``dilemma''.

A jumping-off point for future work is to explore kernel and NN architecture design by optimising a more optimistic bound on the \textit{Bayes error} such as Lemma \ref{lem:gibbs-bayes-opt}---or more recent alternatives \citep{masegosaLIS20}. The essential implication of Lemma \ref{lem:gibbs-bayes-opt} is that not only should the Gibbs error $\gibbserr$ be minimised, but the version space should also include a sufficient \textit{diversity of opinion} so as to make the Gibbs agreement $\alpha_{\mathrm{Gibbs}}$ small. This language is deliberately evocative of concepts in voter aggregation and social choice \citep{arrow, Kramer1977ADM, sixtyfour, meanvoter}, since it is hoped that more of that literature may be brought to bear upon the learning problem.

\section{Conclusion}

This paper has developed a novel synthesis of ideas and techniques to characterise generalisation in interpolating learning machines. This synthesis draws on the literatures of statistical machine learning, social choice theory and convex geometry. The paper adds to a growing body of work that exploits \textit{hidden convexity} to explain perplexing phenomena in neural networks  \citep{NEURIPS2018_5a4be1fa}. In this paper, it is the convexity of version space when lifted to function space, and the log-concavity of the associated posterior.

At the heart of the paper is an old idea \citep{watkin,billiards} that learners can attempt to point-approximate their own majority classifier. The paper shows how this idea---the \textit{Bayes point machine} \citep{bpms}---may be extended to multi-layer NNs, and how it may be used to derive simple risk bounds for interpolating classifiers. The paper opens up many exciting directions for future work. The authors are curious of where these directions may lead.

\bibliography{refs}
\bibliographystyle{icml/icml2022}

\end{document}